\newif\ifanonymous
\definecolor{c1}{HTML}{316AC6}
\definecolor{c2}{HTML}{A88C1A}
\definecolor{c3}{HTML}{4F8436}
\definecolor{c4}{HTML}{8E2F78}
\definecolor{c5}{HTML}{A02026}
\definecolor{c6}{HTML}{1B9996}
\definecolor{c7}{HTML}{68720A}
\definecolor{c8}{HTML}{9E9E9E}
\definecolor{c9}{HTML}{9B0F0F}
\newenvironment{simplealg}{\begin{mdframed}[roundcorner=1pt,backgroundcolor=black!5!white]\setlength{\parindent}{0cm}\setlength{\parskip}{0.2cm}\tt\vspace{-0.15cm}}{\end{mdframed}\vspace{-0.55cm}}
\newcommand{\algind}{\hspace{0.5cm}}
\definecolor{dkblue}{cmyk}{1,.54,.04,.19} 
\theoremstyle{plain}
\newtheorem{theorem}{Theorem}
\newtheorem{lemma}[theorem]{Lemma}
\newtheorem{proposition}[theorem]{Proposition}
\theoremstyle{definition}
\newtheorem{assumption}[theorem]{Assumption}
\newtheorem{remark}[theorem]{Remark}
\theoremstyle{remark}
\newcommand{\argmin}{\operatornamewithlimits{arg\,min}}
\newcommand{\Prob}[1]{\mathbb{P}\left(#1\right)}
\newcommand{\R}{\mathbb{R}}
\newcommand{\E}{\mathbb{E}}
\newcommand{\cF}{\mathcal F}
\newcommand{\cG}{\mathcal G}
\newcommand{\cD}{\mathcal D}
\newcommand{\cT}{\mathcal T}
\newcommand{\cP}{\mathcal P}
\newcommand{\cL}{\mathcal L}
\newcommand{\cX}{\mathcal X}
\newcommand{\ip}[1]{\langle #1 \rangle}
\newcommand{\one}[1]{\mathds{1}(#1)}
\newcommand{\sind}{\mathds{1}}
\newcommand{\bias}{\operatorname{bias}}
\newcommand{\cH}{\mathcal H}
\newcommand{\cHu}{\mathcal H_{\circ}}
\newcommand{\cE}{\mathcal E}
\newcommand{\ones}{\bm{1}}
\newcommand{\zeros}{\bm{0}}
\newcommand{\set}[1]{\left\{ #1\right\}}
\newcommand{\norm}[1]{\left\Vert #1\right\Vert}
\newcommand{\diam}{\operatorname{diam}}
\newcommand{\diag}{\operatorname{diag}}
\newcommand{\tr}{\operatorname{tr}}
\newcommand{\Reg}{\mathfrak{R}}
\newcommand{\opt}{\operatorname{opt}}
\newcommand{\opthp}{\operatorname{opthp}}
\newcommand{\troot}{\operatorname{root}}
\newcommand{\tpath}{\operatorname{path}}
\newcommand{\est}{w}
\let\epsilon\varepsilon
\title{\Large \textsc{Exploration by Optimisation in Partial Monitoring}}
\author{Tor Lattimore and Csaba Szepesv\'ari \\[0.3cm] DeepMind}
\date{}
\begin{document}

\maketitle

\begin{abstract}
We provide a simple, intuitive and efficient algorithm for adversarial $k$-action $d$-outcome partial monitoring games.
Let $m\le d$ denote the maximum number of different observations per action.
We show that for non-degenerate locally observable games  the $n$-round minimax regret is bounded by 
$2m k^{3/2} \sqrt{3n \log(k)}$, 
matching the best known information-theoretic upper bound in this case.
The same algorithm also achieves near-optimal regret for full information, bandit and
globally observable games. High probability bounds and simple experiments are also provided.
\end{abstract}

\section{Introduction}
Partial monitoring is a generalisation of the bandit framework that decouples the loss and the observations.
The framework is sufficiently rich to model bandits, linear bandits, full information games, dynamic pricing, bandits with graph feedback and many problems between
and beyond these examples.
For positive integer $m$ let $[m] = \{1,\dots,m\}$.
A finite adversarial partial monitoring game is determined by a signal matrix
$\Phi \in \Sigma^{k \times d}$ and loss matrix $\cL \in [0,1]^{k\times d}$ where $\Sigma$ is an arbitrary finite set.
Both $\Phi$ and $\cL$ are known to the learner.
The game proceeds over $n$ rounds. First the adversary chooses a sequence $(x_t)_{t=1}^n$ with $x_t \in [d]$. 
In each round $t\in [n]$
the learner chooses an action $A_t \in [k]$, suffers loss $\cL_{A_t x_t}$, but only observes the signal
$\sigma_t = \Phi_{A_tx_t}$.
The regret is
\begin{align*}
\Reg_n = \max_{a \in [k]} \sum_{t=1}^n \left(\cL_{A_tx_t} - \cL_{ax_t}\right)\,.
\end{align*}
The minimax regret is
\begin{align*}
\Reg_n^* = \inf_\pi \sup_{(x_t)_{t=1}^n} \E\left[\Reg_n\right]\,,
\end{align*}
where the expectation is with respect to the randomness in the actions and
$\pi$ is the policy of the learner mapping action/observation sequences to distributions over the actions.
Our main contribution is a simple and efficient algorithm for finite non-degenerate locally observable partial monitoring games for which
\begin{align}
\Reg_n^* \leq  2k^{3/2} m \sqrt{3n \log(k)}\,. \label{eq:regret}
\end{align}
The same algorithm is adaptive to other types of game, achieving near-optimal regret for globally observable games, a regret of $\sqrt{2nk \log(k)}$ for bandits and $\sqrt{2n \log(k)}$ for full information games.

\begin{wraptable}[8]{r}{6.5cm}
\centering
\vspace{0.2cm}
\begin{tabular}{|ll|}
\hline
Trivial & $\Reg_n^* = 0$ \\
Easy & $\Reg_n^* = \Theta(n^{1/2})$ \\
Hard & $\Reg_n^* = \Theta(n^{2/3})$ \\
Hopeless & $\Reg_n^* = \Omega(n)$ \\ \hline
\end{tabular}
\caption{Classification of finite partial monitoring}
\label{tab:cat}
\end{wraptable}
\paragraph{Related work}
Partial monitoring goes back to the work by \cite{Rus99}, who derived Hannan consistent policies.
There has been significant effort in understanding the dependence of the regret on the horizon. The key result is the classification theorem, showing that all 
finite partial monitoring games lie in one of four categories as illustrated in \cref{tab:cat}.
The classification theorem also gives a procedure to decide into which category a given game belongs. Since the game is known in advance, there is no need to learn
the classification of the game. This result has been pieced together over about a decade by a number of authors
\citep{CBLuSt06,FR12,ABPS13,BFPRS14,LS18pm}.
Ironically, the `easy' games present the greatest challenge for algorithm design and analysis. 

The best known bound for an efficient algorithm for `easy' games is 
$\E[\Reg_n] \leq C(\Phi, \cL) \sqrt{n \log(n)}$,
where the constant $C(\Phi, \cL)$ can
be arbitrarily large, even for fixed $k$ and $d$ \citep{FR12,LS18pm}.
Furthermore, the algorithms achieving this bound are complicated to analyse and the proofs yield little insight into the structure of partial monitoring.
Recently we proved that for the `non-degenerate' (defined later) subset of easy games, the minimax regret is at most $\Reg_n^* \leq m k^{3/2} \sqrt{2n \log(k)}$ \citep{LS19pminfo}.
Unfortunately, however, our proof non-constructively appealed to minimax duality and the Bayesian regret analysis techniques by \cite{RV16}. No algorithm was provided, a
deficiency we now resolve.

Partial monitoring has been studied in a variety of contexts. For example, bandits with graph feedback \citep{ACDK15} and a linear feedback setting \citep{LAK14}.  
Some authors also consider a variant of the regret that refines the notion of optimality in hopeless games \citep{Rus99,MS03,Per11,MPS14}.
Our focus is on the adversarial setting, but the stochastic setup is also interesting and is better understood \citep{BSP11,VBK14,KHN15}.

\paragraph{Approach}
Our algorithms are based on exponential weights with importance-weighted loss difference estimators \citep{FS97}.
Crucially, the algorithms do not sample from the distribution proposed by exponential weights. Instead, they solve a convex optimisation
problem to find a loss difference estimator and new distribution over actions for which the loss cannot be much larger than the proposal distribution and 
the `stability' term in the bound of exponential weights is minimised. 
We then prove that the value of the optimisation problem appears in the resulting regret guarantee and provide upper bounds for different classes of games.
The most challenging aspect is to prove the existence of a suitable exploration distribution for locally observable non-degenerate games, which follows 
by combining a minimax theorem with insights from the Bayesian setting.
The idea to modify the distribution proposed by exponential weights is reminiscent of the work by \cite{MS09} for bandits with expert advice, though
the situation here is rather different.

\section{Notation and concepts}

We write $\zeros$ and $\ones$ for the column vectors of all zeros and all ones respectively.
For a positive semidefinite matrix $A$ and vector $x$, we let $\norm{x}_A^2 = x^\top A x$ and $\diag(x)$ be the diagonal matrix with $x$ on the diagonal.
We use $\norm{A}_\infty = \max_{ij} |A_{ij}|$ for the (entrywise) maximum norm of $A$, which we also use for the special case that $A$ is a vector. 
The minimum entry of a matrix is $\min(A) = \min_{ij} A_{ij}$.
The standard basis vectors are $e_1,\ldots,e_d$; we use the same symbols regardless of the dimension, which should be clear from the context in all cases.

\begin{wrapfigure}[8]{r}{4cm}
\vspace{-1cm}
\includegraphics[width=4cm]{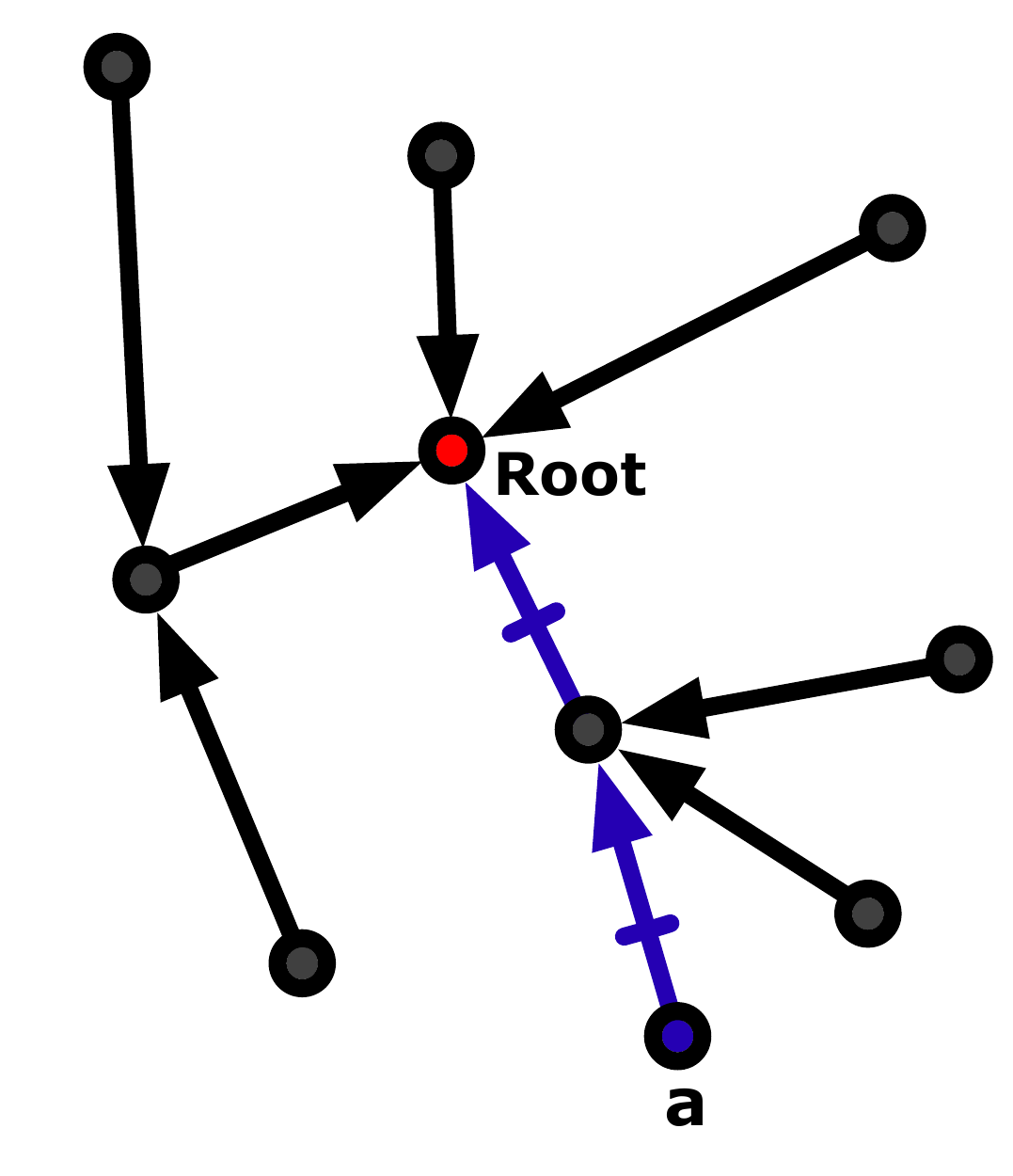}
\end{wrapfigure}
\paragraph{In-trees}
An in-tree with vertex set $[k]$ is a set $\cT \subseteq [k] \times [k]$ representing the edges of 
a directed tree with vertices $[k]$. Furthermore, we assume there is a root vertex denoted by $\troot_{\cT} \in [k]$ such 
that for all $a \in [k]$ there is a directed path $\tpath_{\cT}(a) \subseteq \cT$ from $a$ to the root. The path from the root is the empty set: $\tpath_\cT(\troot_{\cT}) = \emptyset$.
The figure depicts an in-tree over $k = 10$ vertices. The blue (barred) path is $\tpath_{\cT}(a)$.

\paragraph{Partial monitoring}
Throughout we fix a partial monitoring game $\cG = (\Phi, \cL)$ with loss matrix $\cL \in [0,1]^{k \times d}$ and signal matrix $\Phi \in \Sigma^{k \times d}$.
Let $\cD = \{\nu \in [0,1]^d : \norm{\nu}_1 = 1\}$ and $\cP = \{p \in [0,1]^k : \norm{p}_1 = 1\}$ be the probability simplices of dimension $d-1$ and $k-1$ respectively.
It is helpful to notice that if $p \in \cP$ and $\nu \in \cD$, then 
$p^\top \cL \nu$ is the expected loss suffered by a learner sampling an action from $p$ while the adversary samples its output from $\nu$.
Given an action $a \in [k]$ let $C_a = \{\nu \in \cD : e_a^\top \cL \nu \leq \min_{b \in [k]} e_b^\top \cL \nu \}$ be the set of probability vectors in $\cD$ 
where action $a$ is optimal to play in expectation if the adversary plays randomly according to $\nu$. We call $C_a$ the cell of action $a$. 
Cells are convex polytopes because they are bounded and are determined by finitely many non-strict linear constraints.
The collection $\{C_a : a \in [k]\}$, illustrated in \cref{fig:cell}, is called the cell decomposition.

\begin{figure}[h!]
\centering
\includegraphics[width=3.5cm]{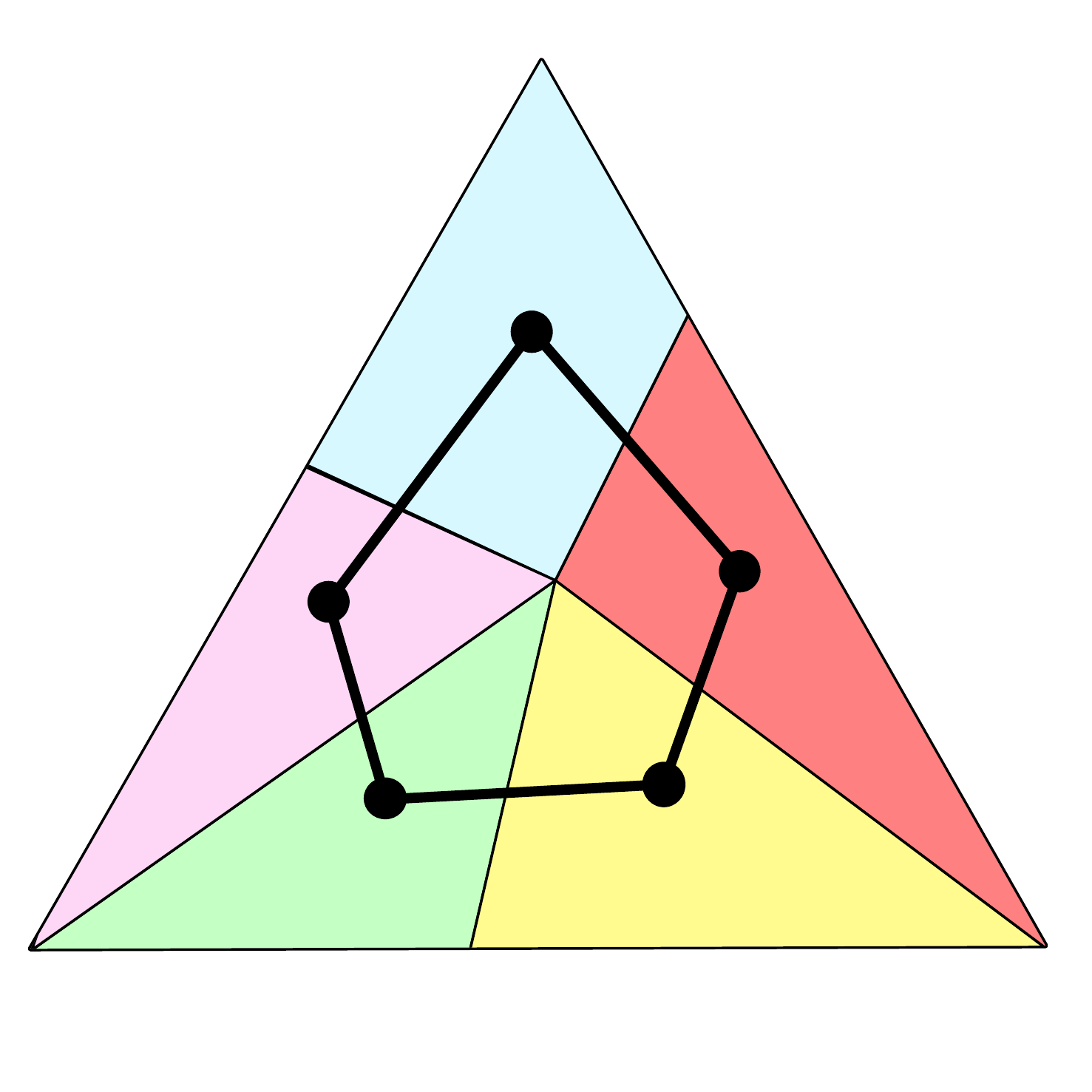}\hspace{2cm}
\includegraphics[width=3.5cm]{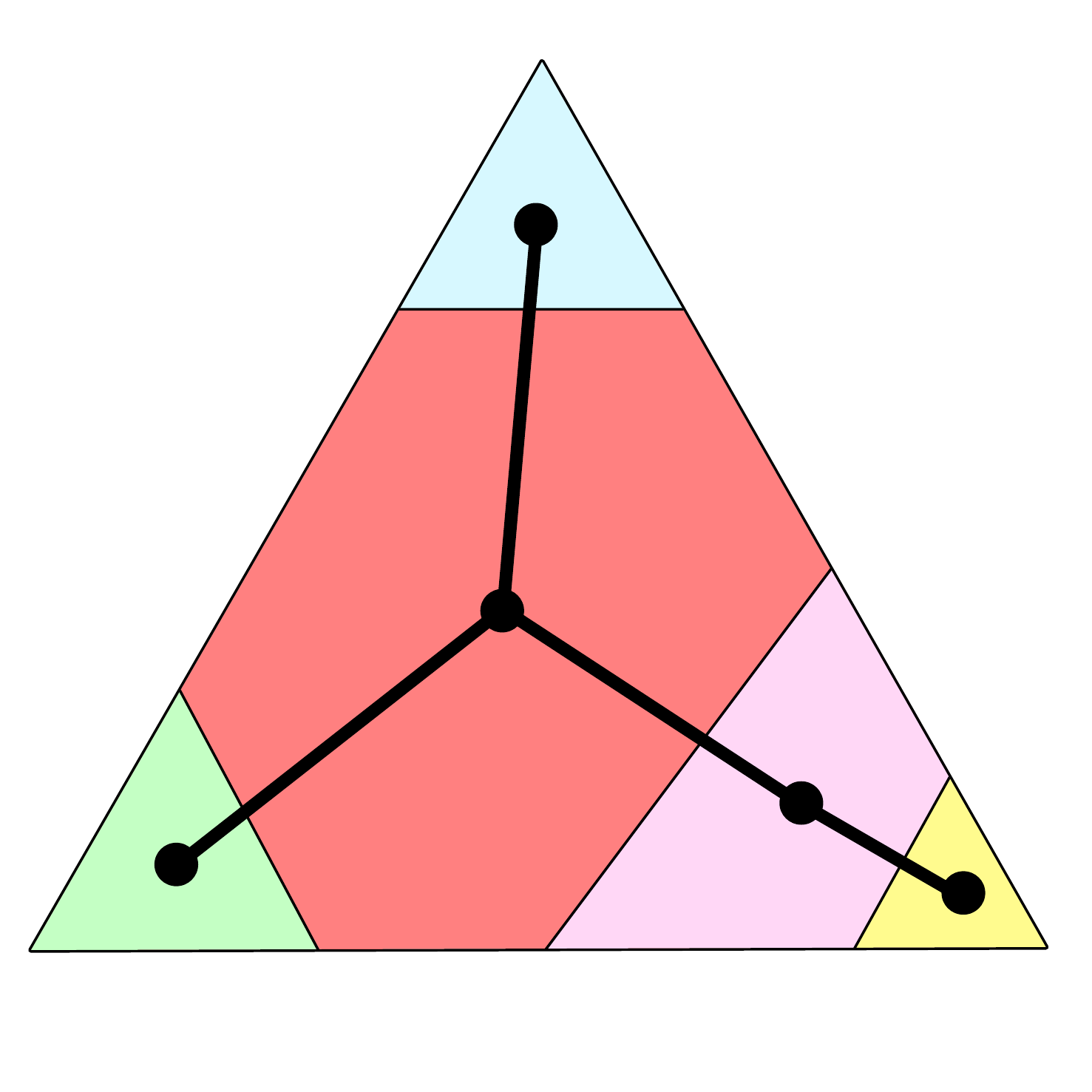}
\caption{Cell decompositions and neighbourhood graphs for two games with $d = 3$ and $k = 5$.}\label{fig:cell}
\end{figure}

\begin{remark}\label{rem:infinite}
A generalisation of the framework allows
$(x_t)_{t=1}^n$ to be chosen in an arbitrary outcome space $\cX$ and $\cL : [k] \times \cX \to [0,1]$ and $\Phi : [k] \times \cX \to \Sigma$ are arbitrary functions.
Our mathematical results continue to hold in this case with $d = |\cX|$, but the proposed algorithms may not be computationally efficient when $|\cX| = \infty$.
A short discussion of infinite games appears in \cref{sec:discussion}.
\end{remark}

\paragraph{Neighbourhood graph}

A key concept in partial monitoring is the neighbourhood relation, which gives those pairs of potentially optimal actions that can be optimal simultaneously. 
An action $a$ is called Pareto optimal if $\dim(C_a) = d-1$ where the dimension of a polytope is defined as the dimension of its affine hull as an affine subspace.
The set of Pareto optimal actions is denoted by $\Pi = \{a : \dim(C_a) = d-1\}$.
An action $a$ with $C_a \neq \emptyset$ and $\dim(C_a) \leq d-2$ is called degenerate while actions with $C_a = \emptyset$ are dominated.
Distinct actions $a$ and $b$ are duplicates if $(e_a - e_b)^\top \cL = \zeros$.
Pareto optimal actions $a$ and $b$ are neighbours if $\dim(C_a \cap C_b) = d-2$.
More informally, actions are neighbours if their cells share a boundary of dimension $d - 2$.
Note that $\dim(C_a \cap C_b) = d-1$ is only possible when $a$ and $b$ are duplicates.
The neighbourhood relation defines a graph over $[k]$. We let $\cE = \{(a, b) : a \text{ and } b \text{ are neighbours}\}$ be the set of edges in this graph. 
A game is called non-degenerate if it has no degenerate actions.
Of course, $\dim(\cD) = d-1$, so actions $a$ with $\dim(C_a) < d-1$ are optimal on a `negligible' subset of $\cD$, where they cannot be uniquely optimal.
For the remainder we make the following simplifying assumption.

\begin{assumption}
$\cG$ is globally observable, non-degenerate and contains no duplicate actions.
\end{assumption}

There is no particular reason to discard degenerate games except their analysis requires careful handling of certain edge cases, as we discuss briefly in the discussion
and extensively in other work \citep{LS18pm}. No modifications to the algorithm are required.

\newcommand{\neighbours}{\textrm{neighb}}
\paragraph{Observability}
The classification of a partial monitoring game depends on both the loss and signal matrices.
What is important to make a non-degenerate game `easy' is that the learner should have some way to estimate the loss differences between neighbouring actions 
by playing only those actions, a property known as local observability.
A game is globally observable if for all edges $e = (a, b) \in \cE$ in the neighbourhood graph there 
exists a function $\est_e : [k] \times \Sigma \to \R$ such that
\begin{align}
\cL_{ax} - \cL_{bx} = \sum_{c=1}^k \est_e(c, \Phi_{cx}) \text{ for all } x \in [d]\,.
\label{eq:est}
\end{align}
A non-degenerate game is locally observable if \cref{eq:est} holds and additionally $\est_e$ can be chosen so that $\est_e(c, \sigma) = 0$ for all $c \notin \{a, b\}$ and all $\sigma$.
Of course, all locally observable games are globally observable. 

\begin{remark}
The reader should be aware that for arbitrary (possibly degenerate) games the definition of local observability is that there exist estimation vectors such that
\cref{eq:est} holds and $\est_e(c, \sigma) = 0$ unless
$e_c^\top \cL = \alpha e_a^\top \cL + (1 - \alpha) e_b^\top \cL$ for some $\alpha \in [0,1]$.
For non-degenerate games the definitions are equivalent by \citep[Lemma 11]{BFPRS14}.
\end{remark}

The classification theorem we mentioned in the introduction says that 
\begin{align*}
\Reg_n^* = \begin{cases}
0 \,, & \text{if there exists an $a$ with $C_a = \cD$}\,; \\
\Theta(n^{1/2})\,, & \text{if the game is locally observable}\,; \\
\Theta(n^{2/3})\,, & \text{if the game is globally observable}\,; \\
\Omega(n)\,, & \text{otherwise}\,,
\end{cases}
\end{align*}
where the Big-Oh notation hides game-dependent constants.

\paragraph{Estimation}
The following lemma and discussion afterwards shows that for globally observable games 
\cref{eq:est} can be chained along paths in the neighbourhood graph to estimate the loss differences between any pair of actions, not just neighbours.
Let $\cH$ be the set of all functions $G : [k] \times \Sigma \to \R^k$.

\begin{lemma}\label{lem:est}
If $\cG$ is globally observable, then
there exists a function $G \in \cH$ such that for all $b, c \in \Pi$,
\begin{align*}
\sum_{a=1}^k \left(G(a, \Phi_{ax})_b - G(a, \Phi_{ax})_c\right) = \cL_{bx} - \cL_{cx} \,.
\end{align*}
\end{lemma}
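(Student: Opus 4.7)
The strategy is to chain the single-edge estimators guaranteed by global observability along paths in the neighbourhood graph. Concretely, I would first invoke the fact that when $\cG$ is non-degenerate (and has no duplicates), the neighbourhood graph restricted to $\Pi$ is connected. This is a standard structural property of partial monitoring which can be cited from prior work (e.g.\ Bartók et al.\ or \cite{LS18pm}); intuitively it follows because the Pareto cells $\{C_a : a \in \Pi\}$ together tile $\cD$, each has full dimension $d-1$, and two neighbouring full-dimensional polytopes in this tiling share a facet of dimension $d-2$, so the dual graph of the decomposition is connected.

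Given connectedness, I would fix any reference vertex $r \in \Pi$ together with an in-tree $\cT$ on $\Pi$ rooted at $r$ all of whose edges are edges of the neighbourhood graph. By the definition of global observability, for every edge $e = (a,b) \in \cE$ there is an estimator $\est_e : [k] \times \Sigma \to \R$ with $\cL_{ax} - \cL_{bx} = \sum_{c=1}^k \est_e(c, \Phi_{cx})$; replacing $(a,b)$ by $(b,a)$ corresponds to negating the estimator, so we may orient each tree edge as pointing from a child toward its parent along $\tpath_\cT(b)$ and talk about $\est_e$ unambiguously.

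I would then define $G \in \cH$ coordinatewise by
\begin{align*}
G(a, \sigma)_b \;=\; \sum_{e \in \tpath_{\cT}(b)} \est_e(a, \sigma) \quad \text{if } b \in \Pi, \qquad G(a, \sigma)_b = 0 \quad \text{otherwise.}
\end{align*}
Fix $x \in [d]$ and $b \in \Pi$. Exchanging the two finite sums and applying \cref{eq:est} to each edge,
\begin{align*}
\sum_{a=1}^k G(a, \Phi_{ax})_b
\;=\; \sum_{e \in \tpath_\cT(b)} \sum_{a=1}^k \est_e(a, \Phi_{ax})
\;=\; \sum_{(u,v) \in \tpath_\cT(b)} \bigl(\cL_{ux} - \cL_{vx}\bigr),
\end{align*}
and the right-hand side telescopes along the path from $b$ to the root $r$, giving $\cL_{bx} - \cL_{rx}$. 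Subtracting the analogous identity for $c \in \Pi$ cancels the $\cL_{rx}$ term and yields the stated equality.

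The main obstacle is really the connectedness claim; once it is in hand, the rest is just telescoping. If one wanted to avoid invoking it as a black box, one could replace the in-tree construction by directly defining, for each pair $b, c \in \Pi$, a path of neighbouring cells and verifying that the resulting estimators are consistent, but invoking the existing structural lemma is cleaner and keeps the proof short.
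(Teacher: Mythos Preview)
Your proposal is correct and follows essentially the same approach as the paper: fix an in-tree $\cT \subseteq \cE$ over $\Pi$, set $G(a,\sigma)_b = \sum_{e \in \tpath_\cT(b)} \est_e(a,\sigma)$, telescope along the path to the root, and take differences. The only distinction is that you spell out the connectedness of the neighbourhood graph on $\Pi$ to justify that such an in-tree exists, whereas the paper simply asserts ``let $\cT \subseteq \cE$ be any in-tree over $\Pi$'' and leaves this implicit.
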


\begin{proof}
Let $\cT\subseteq \cE$ be any in-tree over $\Pi$ and for $b \in \Pi$ let $G(a, \sigma)_b = \sum_{e \in \tpath_{\cT}(b)} \est_e(a, \sigma)$.
Then
\begin{align*}
\sum_{a=1}^k G(a, \Phi_{ax})_b
= \sum_{a=1}^k \sum_{e \in \tpath_{\cT}(b)} \est_e(a, \Phi_{ax})
= \cL_{bx} - \cL_{\troot_\cT x}\,.
\end{align*}
The result follows by repeating the argument for $c \in \Pi$ and taking the difference.
\end{proof}

Given a distribution $p \in \cP \cap (0,1)^k$ and $G$ satisfying the conclusion of \cref{lem:est}, it follows that if $A$ is sampled from $p$ and $x \in [d]$ is arbitrary, then
for actions $a, b$,
\begin{align}
&\E\left[\frac{(e_a - e_b)^\top G(A, \Phi_{Ax})}{p_A}\right]
= \sum_{c=1}^k (e_a - e_b)^\top G(c, \Phi_{cx})
= \cL_{ax} - \cL_{bx}\,.
\label{eq:lossest}
\end{align}
In other words, the function $G$ can be used with importance-weighting to estimate the loss differences.
The set of functions that satisfy the consequences of \cref{lem:est} are denoted by
\begin{align*}
\cHu = \set{G : (e_b - e_c)^\top \sum_{a=1}^k G(a, \Phi_{ax}) = \cL_{bx} - \cL_{cx} \text{ for all } b, c \in \Pi \text{ and } x \in [d]}\,.
\end{align*}

\paragraph{Bandit and full information games}
Bandit and full information games with finitely many possible losses can be modelled by finite partial monitoring games, and serve as useful examples.
Bandit games are those with $\cL = \Phi$ and full information games have $\Phi_{ax} = (\cL_{1x},\ldots,\cL_{kx})$.
Estimation functions witnessing the conclusion of \cref{lem:est} are easily constructed. The obvious choice for bandit games is $G(a, \sigma) = e_a \sigma$ while
for full information games $G(a, \sigma) = p_a \sigma$ where $p \in \cP$ is any probability distribution over the actions.

\paragraph{Exponential weights}

We briefly summarise a well-known bound on the regret of exponential weights.
For $q \in \cP$ define $\Psi_q : \R^k \to \R$ by
\begin{align}
\Psi_q(z) = \big\langle q, \exp(-z) + z - 1\big\rangle\,,
\label{def:psi}
\end{align}
where the exponential function is applied coordinate-wise.
Suppose that $(\hat y_t)_{t=1}^n$ is an arbitrary sequence of (loss) vectors with $\hat y_t \in \R^k$ and $(\eta_t)_{t=1}^n$ is a non-increasing sequence of positive learning rates.
Define a sequence of probability vectors $(q_t)_{t=1}^n$ by
\begin{align*}
q_{ta} = \frac{\exp\left(-\eta_t \sum_{s=1}^{t-1} \hat y_{sa}\right)}{\sum_{b=1}^k \exp\left(-\eta_t \sum_{s=1}^{t-1} \hat y_{sb}\right)}\,.
\end{align*}
Then the following bound on the regret holds for any $a^* \in [k]$ \citep[Chapter 28, for example]{LS19bandit-book},
\begin{align}
\sum_{t=1}^n \sum_{a=1}^k q_{ta} (\hat y_{ta} - \hat y_{ta^*}) \leq \frac{\log(k)}{\eta_n} + \sum_{t=1}^n \frac{\Psi_{q_t}\!\left(\eta_t \hat y_t\right)}{\eta_t}\,.
\label{thm:exp}
\end{align}
Note, there is no randomness here. The term involving $\Psi$ is sometimes called the stability term.
The following inequality is useful: 
\begin{align}
\Psi_q(\eta y) \leq 
\begin{cases}
\eta^2 \norm{y}_{\diag(q)}^2\,, & \text{if } \eta y \geq -\ones\,; \\
\frac{1}{2} \eta^2 \norm{y}^2_{\diag(q)}\,, & \text{if } \eta y \geq \zeros\,,
\end{cases}
\label{eq:psi}
\end{align}
which follows from the inequalities $\exp(-x) \leq x^2 - x + 1$ for all $x \geq -1$ and $\exp(-x) \leq x^2/2 - x + 1$ for $x \geq 0$.
We will use the fact that the perspective $(p, z) \mapsto p \Psi_q(z / p)$ is convex for $p > 0$.

\section{Exploration by optimisation}\label{sec:opt}

Our algorithm is a combination of exponential weights and a careful exploration strategy.
The following example game, called costly matching pennies, is helpful to gain some intuition:
\begin{align}
\cL =
\begin{pmatrix}
0 & 1 \\
1 & 0 \\
c & c
\end{pmatrix}
\qquad \text{ and } \qquad
\Phi =
\begin{pmatrix}
\bot & \bot \\
\bot & \bot \\
\textsc{h} & \textsc{t}
\end{pmatrix}\,.
\qquad
\begin{tikzpicture}[baseline=0cm]
\node[draw=none] at (0,0) {\includegraphics[width=4cm]{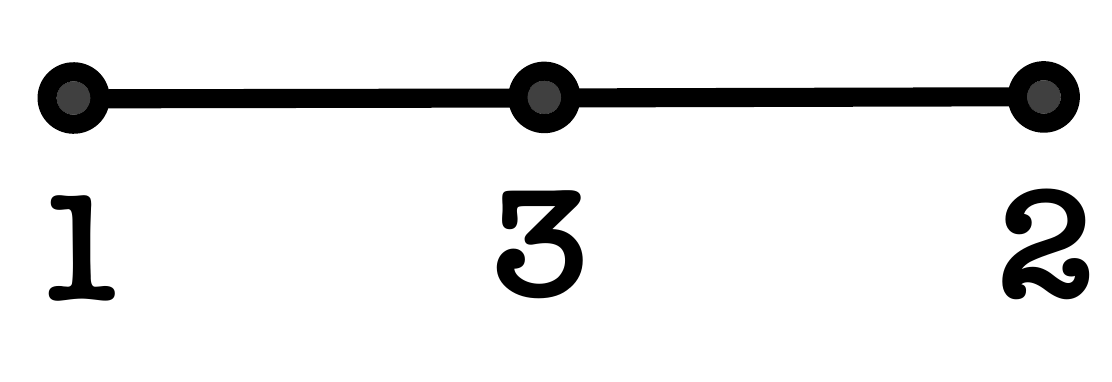}};
\end{tikzpicture}
\label{eq:example}
\end{align}
The figure on the right is the neighbourhood graph when $c = 1/4$, which shows the first two actions are separated by the third.
The structure of the feedback matrix means that the learner only gains information by playing the third action. 
Suppose that $q \in \cP$ is a distribution with $q_3$ close to zero and both $q_1$ and $q_2$ reasonably large.
Sampling an action from $q$ leads to a low probability of gaining information and a correspondingly high variance when estimating the difference between the losses of
the first and second actions.
Consider the transformation of $q$ defined by $p = q - \min(q_1, q_2) (e_1 + e_2) + 2 \min(q_1, q_2) e_3$, which is illustrated in \cref{fig:transform}. 
Then $p_3 \geq q_3$ and 
\begin{align}
(p - q)^\top \cL = -\frac{1}{2} \min(q_1, q_2) \ones\,.
\label{eq:flow-example}
\end{align}
Hence, any algorithm proposing to play distribution $q \in \cP$ with $\min(q_1, q_2) > 0$ could improve its decision by playing $p$, which decreases the 
expected loss and increases the amount of information. Our new algorithm solves an optimisation problem to find a sampling distribution and estimation function 
that minimise the sum of the loss relative to a distribution
proposed by exponential weights and the stability term in \cref{thm:exp}. In the example above the solution always results in a distribution $p$ with $\min(p_1, p_2) = 0$. 
By contrast, previous algorithms for adversarial locally observable partial monitoring games do not exhibit this behaviour \citep{FR12,LS18pm}.

\begin{figure}[h!]
\centering
\includegraphics[width=12cm]{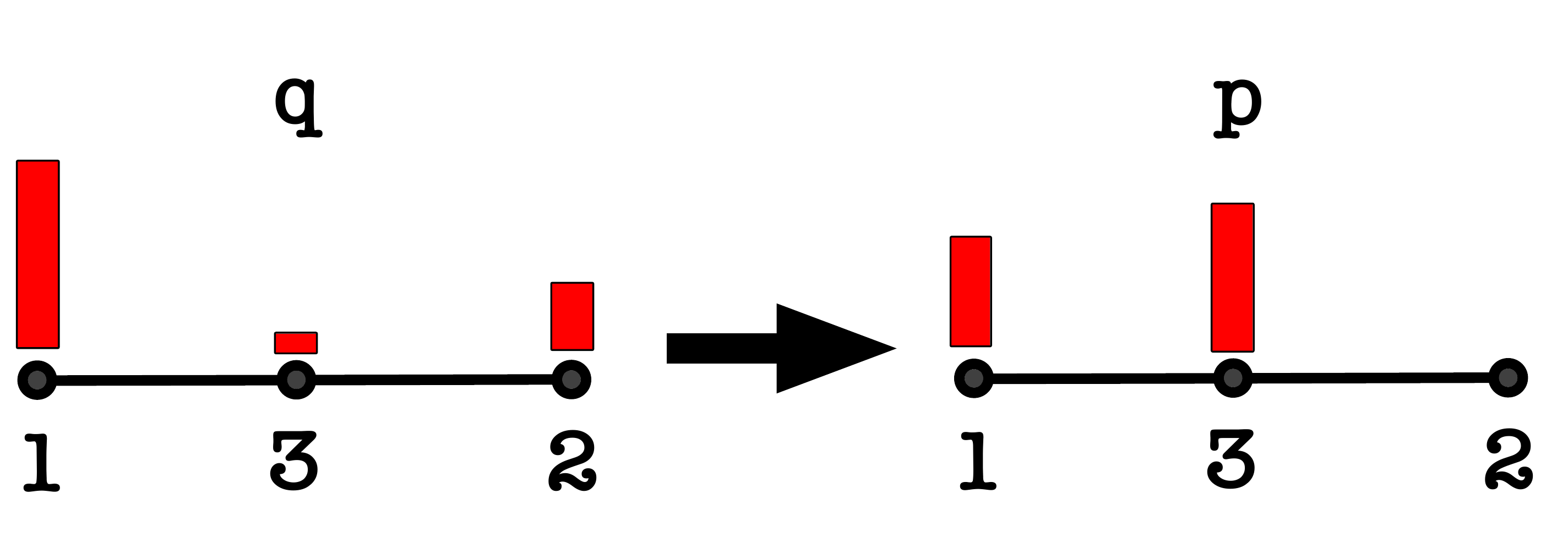}
\caption{An exploration distribution $p$ derived from $q$ for the game in \cref{eq:example}. The expected loss when playing $p$ is smaller than playing $q$ and simultaneously more information 
is gained because the third action is revealing.} \label{fig:transform}
\end{figure}

\paragraph{Optimisation problem}
Suppose that exponential weights proposes a distribution $q \in \cP$.
Our algorithm solves an optimisation problem to find an exploration distribution and estimation function that determine the loss estimators.
Given an estimation function $G \in \cH$ and outcome $x \in [d]$, define a `bias' function that measures the degree of bias when using importance-weighting
to estimate loss differences:
\begin{align*}
\bias_q(G ; x) = \left\langle q,\, \cL e_x - \sum_{a=1}^k G(a, \Phi_{ax})\right\rangle + \max_{c \in \Pi} \left(\sum_{a=1}^k G(a, \Phi_{ax})_c - \cL_{cx}\right)\,.
\end{align*}
As a function of $G$ the bias is max-affine and hence convex.
It is always non-negative and vanishes when the estimation function $G \in \cHu$ is unbiased.
For $q \in \cP$ and $\eta > 0$ let $\opt_q(\eta)$ be the value of the following convex optimisation problem:
\begin{mdframed}[roundcorner=1pt,backgroundcolor=black!5!white]
\begin{equation}
\label{eq:opt}
\begin{alignedat}{3}
&\underset{G \in \cH, p \in \cP}{\text{minimise}} \qquad && \max_{x \in [d]} \Bigg[
  \frac{(p-q)^\top \cL e_x + \bias_q(G ; x)}{\eta} + \frac{1}{\eta^2} \sum_{a=1}^k p_a \Psi_q\left(\frac{\eta G(a, \Phi_{ax})}{p_a}\right) 
\Bigg] \,. 
\end{alignedat}
\end{equation}
\end{mdframed}
We assume that \cref{eq:opt} can be solved exactly to obtain minimising values for $G \in \cH$ and $p \in \cP$.
Our algorithm, however, is robust to small perturbations of these quantities.
Numerical issues and a practical approximation are discussed in \cref{sec:approx}.
Let
\begin{align*}
\opt_*(\eta) = \sup_{q \in \cP} \opt_q(\eta)\,.
\end{align*}
Note that both $\opt_q(\eta)$ and $\opt_*(\eta)$ depend on $\cG$; this dependence is not shown to minimize clutter.
The optimisation problem can be formulated as an exponential cone problem and solved using off-the-shelf solvers. 
The algorithm is a simple combination of exponential weights using the exploration distribution and estimation function provided by solving \cref{eq:opt}.

\begin{algorithm}[h!]
\begin{simplealg}

\textbf{input:} $\eta$

\textbf{for} $t = 1,\ldots,n$: 

\algind Compute $\displaystyle Q_{ta} = \frac{\sind_\Pi(a) \exp\left(-\eta \sum_{s=1}^{t-1} \hat y_{sa}\right)}{\sum_{b \in \Pi} \exp\left(-\eta \sum_{s=1}^{t-1} \hat y_{sb}\right)}$ \\[0.1cm]

\algind Solve (\ref{eq:opt}) with $q = Q_t$ to find $P_t \in \cP$ and $G_t \in \cH$ \\[0.1cm]

\algind Sample $A_t \sim P_t$, observe $\sigma_t$ and compute $\displaystyle \hat y_t = \frac{G_t(A_t, \sigma_t)}{P_{tA_t}}$
\end{simplealg}
\caption{Exponential weights for partial monitoring with confidence}
\label{alg:simple}
\end{algorithm}

The regret of \cref{alg:simple} depends on the learning rate and the value of the optimisation problem, which depends on the structure of the game.
Bounds on $\opt_*(\eta)$ are provided subsequently.

\begin{theorem}\label{thm:main}
For any $\eta > 0$, the regret of \cref{alg:simple} is bounded by $\displaystyle \E[\Reg_n] \leq \frac{\log(k)}{\eta} + \eta n \opt_*(\eta)$.
\end{theorem}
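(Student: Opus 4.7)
The plan is to reduce to the exponential weights bound (displayed as \cref{thm:exp} in the paper) and then recognize that the per-round surplus can be exactly matched against the objective of the optimisation problem \cref{eq:opt} evaluated at the chosen $(P_t, G_t)$.

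Fix any $a^* \in \Pi$ (the maximiser of the regret in hindsight lies in $\Pi$ under the assumption that the game has no duplicates, degenerate, or dominated actions). I would first apply \cref{thm:exp} with $q_t = Q_t$ (supported on $\Pi$), $\hat y_t = G_t(A_t,\sigma_t)/P_{tA_t}$, and $\eta_t = \eta$, obtaining the deterministic inequality
\begin{align*}
\sum_{t=1}^n \sum_{a\in \Pi} Q_{ta}(\hat y_{ta}-\hat y_{ta^*}) \le \frac{\log(k)}{\eta} + \frac{1}{\eta}\sum_{t=1}^n \Psi_{Q_t}(\eta \hat y_t)\,.
\end{align*}

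Next I would take expectations and pass them under the sum using the tower property. Conditioning on the history, $\E_t[\hat y_{ta}] = \sum_{c=1}^k G_t(c,\Phi_{cx_t})_a =: g_t(x_t)_a$, so the left hand side becomes $\E[\sum_t (Q_t^\top g_t(x_t) - g_t(x_t)_{a^*})]$. The actual (expected) per-round regret against $a^*$ can be written as
\begin{align*}
P_t^\top \cL e_{x_t} - \cL_{a^* x_t} = (P_t-Q_t)^\top \cL e_{x_t} + \langle Q_t, \cL e_{x_t}-g_t(x_t)\rangle + (Q_t^\top g_t(x_t) - g_t(x_t)_{a^*}) + (g_t(x_t)_{a^*}-\cL_{a^* x_t})\,.
\end{align*}
Since $a^*\in \Pi$, the second and fourth terms combine to at most $\bias_{Q_t}(G_t;x_t)$ by definition of the bias functional. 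This is the key algebraic step: it converts the unbiasedness gap of $G_t$ into the exact quantity that appears in \cref{eq:opt}.

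For the stability term, I would again condition on the history: $\E_t[\Psi_{Q_t}(\eta\hat y_t)]=\sum_a P_{ta}\Psi_{Q_t}(\eta G_t(a,\Phi_{ax_t})/P_{ta})$, which is exactly the stability expression inside the max in \cref{eq:opt}. Combining the pieces and factoring out $\eta$, the bound on $\E[\Reg_n]$ against $a^*$ becomes
\begin{align*}
\frac{\log(k)}{\eta} + \eta\, \E\!\left[\sum_{t=1}^n\!\left(\frac{(P_t-Q_t)^\top \cL e_{x_t}+\bias_{Q_t}(G_t;x_t)}{\eta}+\frac{1}{\eta^2}\sum_a P_{ta}\Psi_{Q_t}\!\left(\frac{\eta G_t(a,\Phi_{ax_t})}{P_{ta}}\right)\right)\right].
\end{align*}
Each bracketed summand at time $t$ is at most its maximum over $x\in[d]$, which by optimality of $(P_t,G_t)$ in \cref{eq:opt} with $q=Q_t$ equals $\opt_{Q_t}(\eta)\le \opt_*(\eta)$. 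Summing over $t$ and taking the supremum over $a^*\in \Pi$ yields the stated bound.

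The main obstacle I expect is the algebraic decomposition that matches exactly the three pieces of \cref{eq:opt}: the distribution-shift term $(P_t-Q_t)^\top \cL e_{x_t}$, the bias correction, and the stability term. Everything else is routine, but this identification is the whole point of the construction of $\opt_q(\eta)$, so some care is needed to make sure the $\max_{c\in\Pi}$ in the definition of $\bias$ dominates the particular $a^*$ chosen. A minor additional check is that restricting exponential weights to $\Pi$ (rather than $[k]$) is legitimate: this is fine because the optimum in hindsight can be taken in $\Pi$ under the standing assumption.
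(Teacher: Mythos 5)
Your proposal is correct and follows essentially the same route as the paper's proof: decompose the per-round expected regret into the distribution-shift term, the bias term (using $a^*\in\Pi$ so that the $\max_{c\in\Pi}$ in $\bias_{Q_t}$ dominates), and the exponential-weights term, then use the tower property to turn the stability term into the expression appearing in \cref{eq:opt} and invoke optimality of $(P_t,G_t)$. The paper merely states the resulting inequality in one line where you spell out the four-term identity explicitly; the content is the same.
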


\begin{proof}
Let $a^* = \argmin_{a \in [k]} \sum_{t=1}^n \cL(a, x_t)$ be the optimal action in hindsight, where ties are broken so that $a^* \in \Pi$ is Pareto optimal. 
Note, this is where we are using that the adversary is oblivious.
Then
\begin{align}
\E[\Reg_n] 
&= \E\left[\sum_{t=1}^n \sum_{b=1}^k P_{tb} (\cL_{b x_t} - \cL_{a^* x_t})\right] \nonumber \\
&\leq \E\left[\sum_{t=1}^n \left((P_t - Q_t)^\top \cL e_{x_t} + \bias_{Q_t}(G_t;x_t)\right)\right] + \E\left[\sum_{t=1}^n \sum_{b=1}^k Q_{tb} (\hat y_{tb} - \hat y_{ta^*})\right] \,.
\label{eq:reg1}
\end{align}
Next,
\begin{align}
\E\left[\sum_{t=1}^n \sum_{b=1}^k Q_{tb} (\hat y_{tb} - \hat y_{ta^*})\right] 
&\leq \frac{\log(k)}{\eta} + \frac{1}{\eta} \E\left[\sum_{t=1}^n \Psi_{Q_t}(\eta \hat y_t) \right] \label{eq:exp} \\
&= \frac{\log(k)}{\eta} + \frac{1}{\eta} \E\left[\sum_{t=1}^n \sum_{a=1}^k P_{ta} \Psi_{Q_t}\!\left(\frac{\eta G_t(a, \Phi_{ax_t})}{P_{ta}}\right) \right]\,, \nonumber
\end{align}
where \cref{eq:exp} follows from \cref{thm:exp} and the definitions of $Q_t$ and $\hat y_t$.
The result follows by combining \cref{eq:reg1} and the definition of $\opt_*(\eta)$.
\end{proof}

\paragraph{Applications}
\cref{tab:bounds} provides bounds on $\opt_*(\eta)$ for different games and the regret bound that results from optimising the learning rate.
The proofs are provided in \cref{sec:bounds,sec:bound-local}. 
Except for locally observable games, they mirror existing proofs bounding the stability of exponential weights.
In this way many other results could be added to this table, including bandits with graph feedback \citep{ACDK15} and linear bandits with finitely many arms \citep{BCK12}.

\begin{table}[h!]
\centering
\renewcommand{\arraystretch}{1.9}
\small
\scalebox{0.9}{
\begin{tabular}{|p{4.3cm}|p{1.8cm}p{3.2cm}p{1.4cm}p{4.3cm}|}
\hline 
\rowcolor{black!10!white}
 \textbf{Game type} &
{ \renewcommand{\arraystretch}{0.95}
 \begin{tabular}{c}
 $\text{\textbf{opt}}^{\bm *}\bm{ (\eta)\, }$ \\
  \textbf{bound}
\end{tabular}  
}
   & \textbf{Conditions} & \textbf{Ref.} & \textbf{Regret}  \\ \hline
\cellcolor{black!10!white} \textsc{Bandit} & $k/2$ & & Prop.~\ref{prop:bandit} & $\sqrt{2nk \log(k)}$\\
\cellcolor{black!10!white} \textsc{Full information} & $1/2$ & & Prop.~\ref{prop:full} & $\sqrt{2n \log(k)}$ \\ 
\cellcolor{black!10!white} \textsc{Globally observable} & $\displaystyle c_{\cG} / \sqrt{\eta}$ & $\displaystyle \eta \leq 1/c_{\cG}^2$ & Prop.~\ref{prop:hard} & $3(c_{\cG}n/2)^{2/3}(\log(k))^{1/3}$ \\ 
\cellcolor{black!10!white} \textsc{Locally observable \newline non-degenerate} & $3k^3m^2$ & $\displaystyle \eta \leq 1/(m k^2)$ & Prop.~\ref{prop:easy} & $2 k^{3/2}m \sqrt{3n \log(k)}$ \\ \hline
\end{tabular}
}
\caption{Upper bounds on $\opt_*(\eta)$ and the regret of \cref{alg:simple} for different games. 
The constant $c_{\cG}$ is game-dependent and can be exponentially large in $d$, which we believe is unavoidable.}\label{tab:bounds}
\end{table}

\section{Online learning rate tuning}\label{sec:adaptive}

Tuning the learning rate used by \cref{alg:simple} is delicate.
First, it is not clear that $\opt_*(\eta)$ can be computed efficiently in general.
Second, the learning rate that minimises the bound in \cref{thm:main} may be overly conservative.
\cref{alg:adaptive} mitigates these issues by using an adaptive learning rate.
The algorithm is parameterised by a constant $B$ that determines the initialisation of the learning rate. 
$B$ should be chosen large enough that $\eta = 1/B$ satisfies the conditions for the relevant game in \cref{tab:bounds}, but the additional regret
from choosing $B$ too large is only additive.

\begin{algorithm}[h!]
\begin{simplealg}
\textbf{input:} $B$

\textbf{for} $t = 1,2,\ldots,$: 

\algind Set $\displaystyle \eta_t = \min\left\{\frac{1}{B},\,\, \sqrt{\frac{\log(k)}{1 + \sum_{s=1}^{t-1} V_s}}\right\}$ \\[0.1cm]

\algind Compute $\displaystyle Q_{ta} = \frac{\sind_\Pi(a) \exp\left(-\eta_t \sum_{s=1}^{t-1} \hat y_{sa}\right)}{\sum_{b \in \Pi}\exp\left(-\eta_t \sum_{s=1}^{t-1} \hat y_{sb}\right)}$ \\[0.1cm]

\algind Solve (\ref{eq:opt}) with $\eta = \eta_t$ and $q = Q_t$ to find $V_t = \max\{0, \opt_{Q_t}(\eta_t)\}$ and corresponding $P_t$ and $G_t$ \\[0.1cm]

\algind Sample $A_t \sim P_t$, observe $\sigma_t$ and compute $\displaystyle \hat y_t = \frac{G_t(A_t, \sigma_t)}{P_{tA_t}}$
\end{simplealg}
\caption{Adaptive exponential weights for partial monitoring}
\label{alg:adaptive}
\end{algorithm}
We now present a general theorem that bounds the regret as a function of $(V_t)_t$, which is computed by the algorithm. 
This theorem implies that  
\cref{alg:adaptive} recovers all regret bounds in \cref{tab:bounds} up to small constant factors and additive terms. 

\begin{theorem}\label{thm:adaptive}
There exists a universal constant $c > 0$ such that
the regret of \cref{alg:adaptive} is bounded by 
\begin{align*}
\E[\Reg_n] \leq 5\E\left[\sqrt{\left(1 + \sum_{t=1}^n V_t\right) \log(k)}\right] + \E\left[\max_{t \in [n]} V_t \right] \sqrt{\log(k)} + B \log(k) \,. 
\end{align*}
\end{theorem}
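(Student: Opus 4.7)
The plan is to mirror the proof of \cref{thm:main} while carefully handling the time-varying learning rate. As there, let $a^* \in \Pi$ be optimal in hindsight with ties broken to remain in $\Pi$ (valid because the adversary is oblivious and no non-Pareto action can be uniquely optimal in expectation). The same decomposition gives
$$\E[\Reg_n] \le \E\!\left[\sum_{t=1}^n\!\left((P_t-Q_t)^\top\cL e_{x_t}+\bias_{Q_t}(G_t;x_t)\right)\right]+\E\!\left[\sum_{t=1}^n\sum_{b\in\Pi} Q_{tb}(\hat y_{tb}-\hat y_{ta^*})\right],$$
and I would bound the second term by the adaptive-learning-rate version of \cref{thm:exp}, which for non-increasing $(\eta_t)$ yields pathwise
$$\sum_{t=1}^n\sum_{b\in\Pi} Q_{tb}(\hat y_{tb}-\hat y_{ta^*})\;\le\;\frac{\log k}{\eta_n}+\sum_{t=1}^n\frac{\Psi_{Q_t}(\eta_t\hat y_t)}{\eta_t}.$$

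Next I would absorb the per-round loss, bias and stability terms into the value of \cref{eq:opt}. Conditioning on the history up to time $t$ and taking expectation over $A_t\sim P_t$, the defining feasibility of $(P_t,G_t)$ at the outcome $x=x_t$ gives
$$\E_t\!\left[(P_t-Q_t)^\top\cL e_{x_t}+\bias_{Q_t}(G_t;x_t)+\frac{\Psi_{Q_t}(\eta_t\hat y_t)}{\eta_t}\right]\;\le\;\eta_t\,\opt_{Q_t}(\eta_t)\;\le\;\eta_t V_t,$$
where the inner expectation identifies $\E_t[\Psi_{Q_t}(\eta_t\hat y_t)]=\sum_a P_{ta}\Psi_{Q_t}(\eta_t G_t(a,\Phi_{ax_t})/P_{ta})$ and the last step uses $V_t=\max\{0,\opt_{Q_t}(\eta_t)\}$. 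Summing and using the tower rule collapses the right-hand side to
$$\E[\Reg_n]\;\le\;\E\!\left[\frac{\log k}{\eta_n}\right]+\E\!\left[\sum_{t=1}^n\eta_tV_t\right].$$

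It remains to bound both expectations directly from the definition of $\eta_t$. The inequality $1/\eta_n \le B+\sqrt{(1+\sum_{s<n}V_s)/\log k}$ immediately gives $\log k/\eta_n\le B\log k+\sqrt{(1+\sum_tV_t)\log k}$, contributing $1\cdot\E[\sqrt{\cdot}]$ and $B\log k$. For the variance sum, substituting $\eta_t\le\sqrt{\log k/(1+\sum_{s<t}V_s)}$ and using the adaptive-sum lemma
$$\sum_{t=1}^n\frac{a_t}{\sqrt{1+\sum_{s<t}a_s}}\;\le\;4\sqrt{1+\sum_{t}a_t}+\max_ta_t$$
(for any non-negative $(a_t)$) yields $\sum\eta_tV_t\le 4\sqrt{(1+\sum V_t)\log k}+(\max_tV_t)\sqrt{\log k}$. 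Adding the two contributions and combining the $\sqrt{\cdot}$ coefficients ($1+4=5$) gives the claimed bound.

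The main obstacle is the last adaptive-sum lemma: because $V_t$ can spike to be much larger than $1+\sum_{s<t}V_s$, one cannot simply telescope $\sqrt{S_t}$ with $S_t=1+\sum_{s<t}V_s$. The route I would take is to write each summand as $a_t/\sqrt{S_t}\le a_t/\sqrt{S_{t+1}}+a_t\bigl(1/\sqrt{S_t}-1/\sqrt{S_{t+1}}\bigr)$; the first piece telescopes to at most $2\sqrt{1+\sum_ta_t}$, and the remaining ``spike'' contribution is charged to a single $\max_ta_t$ via a pigeonhole argument based on $S_{t+1}\le 2\max(S_t,a_t)$ (so a doubling of $S_t$ occurs only $O(\log)$ times, each controllable by a single max term with constants arranged to absorb the logarithm). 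A secondary subtlety is that the adaptive form of \cref{thm:exp} is commonly stated for non-negative losses, whereas our $\hat y_t$ is signed; this is handled by applying the bound to the shifted sequence $\hat y_t-(\min_b\hat y_{tb})\mathbf 1$, whose additive constant cancels against $\hat y_{ta^*}$ on both sides of the inequality.
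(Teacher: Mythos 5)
Your proposal is correct and follows essentially the same route as the paper: the decomposition from \cref{thm:main}, the adaptive-learning-rate form of \cref{thm:exp}, the reduction to $\E[\log(k)/\eta_n + \sum_t \eta_t V_t]$, and the adaptive-sum lemma (which the paper cites as \cref{lem:tech1} rather than reproving). Your final worry about signed losses is a non-issue here since \cref{thm:exp} as stated holds for arbitrary real loss vectors, and note that the shift trick you propose would actually alter the stability term because $\Psi_q$ is not invariant under adding a multiple of $\ones$.
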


A corollary using the definition of $V_t$ is that the regret of \cref{alg:adaptive} is bounded by
\begin{align}
\E[\Reg_n] = O\left(\sqrt{n \sup \{ \max\{0, \opt_*(\eta)\} : \eta \leq 1/B\} \log(k)}+B\log(k)\right)\,.
\label{eq:sup-bound}
\end{align}
This bound is most useful for  full information, bandit and locally observable 
non-degenerate games when $B$ can be chosen so that $\eta_1 \leq 1/B$ satisfies the conditions in the second column of \cref{tab:bounds}. As a consequence, for games of this category
\cref{thm:adaptive} recovers the bounds in the last column \cref{tab:bounds} up to small constant factors and additive terms.

For games that are globally observable but not locally observable $\opt_*(\eta) \to \infty$ as $\eta \to 0$ and the supremum in \cref{eq:sup-bound} is infinite.
Soon, we will argue that the learning rate used by \cref{alg:adaptive}  does not decrease too fast and that the algorithm still achieves the regret bound shown in \cref{tab:bounds} for globally observable games.

\begin{proof}[Proof sketch of \cref{thm:adaptive}]
We explain only the differences relative to the proof of \cref{thm:main}.
Recall that $V_t = \max\{0, \opt_{Q_t}(\eta_t)\}$.
Note, the learning rate $\eta_t$ is non-increasing.
Hence, by \cref{thm:exp},
\begin{align}
\E[\Reg_n] 
&\leq \E\left[\sum_{t=1}^n \sum_{a=1}^k Q_{ta}(\hat y_{ta} - \hat y_{ta^*}) + \sum_{t=1}^n (P_t - Q_t)^\top \cL e_{x_t} + \bias_{Q_t}(G_t ; x_t)\right] \nonumber \\
&\leq \E\left[\frac{\log(k)}{\eta_n} + \sum_{t=1}^n \frac{\Psi_{Q_t}(\eta_t \hat y_t)}{\eta_t} + \sum_{t=1}^n (P_t - Q_t)^\top \cL e_{x_t} + \bias_{Q_t}(G_t ;x_t)\right] \label{eq:exp-change} \\
&\leq \E\left[\frac{\log(k)}{\eta_n} + \sum_{t=1}^n \eta_t V_t\right] \label{eq:ad-opt}\,,
\end{align}
where \cref{eq:ad-opt} follows from the same argument as the proof of \cref{thm:main} and the definition of $V_t$. 
The second term is bounded using \cref{lem:tech1} in the appendix by
\begin{align*}
\sum_{t=1}^n \eta_t V_t
&\leq 4 \sqrt{\left(1 + \frac{1}{2} \sum_{t=1}^n V_t\right) \log(k)} + 
\max_{t \in [k]} V_t
\sqrt{\log(k)}\,.
\end{align*}
The definition of $(\eta_t)_{t=1}^n$ means that
\begin{align*}
\frac{\log(k)}{\eta_n} \leq B \log(k) + \sqrt{\left(1 + \sum_{t=1}^n V_t\right) \log(k)}\,.
\end{align*}
The bound follows by combining the parts and naive algebra.
\end{proof}

As promised, we now show that for sufficiently large $B$ the algorithm achieves the best known regret for any globally observable game.

\begin{proposition}
Fix a globally observable game $\cG$.
Suppose that $\alpha > 0$ and
$\opt_*(\eta) \leq \alpha / \sqrt{\eta}$ for all $\eta \leq 1/B$. 
Then, the regret of \cref{alg:adaptive} on $\cG$ is at most
\begin{align*}
\E[\Reg_n] = O\left((n \alpha)^{2/3} (\log(k))^{1/3} + B \log(k) \right)\,,
\end{align*}
where the Big-Oh hides only universal constants.
\end{proposition}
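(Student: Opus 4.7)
The plan is to invoke \cref{thm:adaptive} and control $S_n := \sum_{t=1}^n V_t$ using the hypothesis $\opt_*(\eta) \leq \alpha/\sqrt{\eta}$ combined with the definition of $\eta_t$ in \cref{alg:adaptive}.

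First, since $\eta_t \leq 1/B$ holds by construction, the hypothesis gives $V_t \leq \max\{0,\opt_*(\eta_t)\} \leq \alpha/\sqrt{\eta_t}$. I would then partition the rounds according to which branch of the minimum in $\eta_t = \min\{1/B,\sqrt{\log(k)/(1+S_{t-1})}\}$ is active. Since $S_{t-1}$ is non-decreasing in $t$, the first branch (Case~A, where $\eta_t = 1/B$ and $V_t \leq \alpha\sqrt{B}$) occupies an initial segment $t \leq T_0$ on which $1+S_{T_0-1} \leq B^2\log k$, yielding the initial-segment bound $S_{T_0} \leq B^2 \log k + \alpha\sqrt{B}$.

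For $t > T_0$ one has Case~B, where $V_t \leq \alpha(1+S_{t-1})^{1/4}/(\log k)^{1/4}$. The key analytic step is to apply the tangent inequality for the concave function $x \mapsto x^{3/4}$:
\begin{align*}
(1+S_t)^{3/4} - (1+S_{t-1})^{3/4} \leq \tfrac{3}{4}(1+S_{t-1})^{-1/4} V_t \leq \tfrac{3\alpha}{4(\log k)^{1/4}}.
\end{align*}
Telescoping from $T_0+1$ to $n$ and applying $(a+b)^{4/3}\leq 2^{1/3}(a^{4/3}+b^{4/3})$ then yields
\begin{align*}
1 + S_n = O\!\left(B^2 \log k + \frac{(n\alpha)^{4/3}}{(\log k)^{1/3}}\right).
\end{align*}

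Plugging this into \cref{thm:adaptive} gives the dominant term $\sqrt{(1+S_n)\log k} = O(B\log k + (n\alpha)^{2/3}(\log k)^{1/3})$, which is exactly the target bound. The other two contributions are absorbed: the $B\log k$ term appears explicitly in \cref{thm:adaptive}, and the auxiliary term $\max_t V_t \cdot \sqrt{\log k}$ is controlled by the uniform estimate $V_t \leq \alpha\sqrt{B} + \alpha(1+S_n)^{1/4}/(\log k)^{1/4}$, giving at most $O(\alpha\sqrt{B\log k} + \alpha^{4/3}n^{1/3}(\log k)^{1/6})$, both parts of which are dominated by the target terms for large $n$ (the trivial bound $\Reg_n \leq n$ handles small $n$).

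The main obstacle is solving the nonlinear recursion for $S_n$; the concavity trick above is the crux, after which everything is algebra. A secondary delicate point is that the initial Case~A segment contributes an additive $B^2\log k$ inside the square root, which is precisely what produces the claimed additive $B\log k$ term in the regret.
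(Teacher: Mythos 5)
Your argument is correct and is essentially the paper's own proof: the paper likewise invokes \cref{thm:adaptive}, bounds $V_t \le \alpha/\sqrt{\eta_t}$, and solves the same recursion, packaging your tangent-line/concavity step for $x\mapsto x^{3/4}$ as an ODE comparison (\cref{lem:tech2}) applied to $\log(k)/\eta_t^2 = \max\{B^2\log k,\, 1+S_{t-1}\}$, which absorbs your Case~A/Case~B split automatically. The differences are cosmetic, and your explicit treatment of the $\max_t V_t$ term is if anything more careful than the paper's one-line ``lower-order'' remark.
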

Note that the conditions of this result will be satisfied with $\alpha = c_{\cG}$ once $B\ge c_{\cG}^2$ with (cf. \cref{tab:bounds}).
\begin{proof}
The result follows from \cref{thm:adaptive} and an almost sure bound on $\sum_{t=1}^n V_t$.
Clearly, $\eta_t \leq 1/B$ and so by assumption $V_t = \max\{0, \opt_{Q_t}(\eta_t)\} \leq \alpha / \sqrt{\eta_t}$.
Then, using the definition of $(\eta_t)_{t=1}^n$,
\begin{align*}
\frac{\log(k)}{\eta_{t+1}^2} \leq \frac{\log(k)}{\eta_t^2} + \frac{\alpha}{\eta_t^{1/2}} 
= \frac{\log(k)}{\eta_t^2} + \frac{\alpha}{\log(k)^{1/4}} \left(\frac{\log(k)}{\eta_t^2}\right)^{\frac{1}{4}}\,.
\end{align*}
Hence, using the definition of $\eta_n$ and \cref{lem:tech2} in the appendix, 
\begin{align*}
1 + \sum_{t=1}^n V_t \leq \frac{\log(k)}{\eta_n^2} \leq \left(\frac{3\alpha (n-1)}{4\log(k)^{1/4}} + \max\{1, B^2 \log(k)\}^{3/4}\right)^{4/3}\,.
\end{align*}
Substituting the above bound into the dominant term of \cref{thm:adaptive} shows that 
\begin{align*}
\sqrt{\left(1 + \sum_{t=1}^n V_t\right) \log(k)} = O\Big((k n \alpha)^{2/3} (\log(k))^{1/3} + B \log(k)\Big)\,.
\end{align*}
The result is completed by noting that $\max_{t \in [n]} V_t \leq \alpha \eta_n^{-1/2}$ is lower-order. 
\end{proof}

\section{Bandit, full information and globally observable games}\label{sec:bounds}

We now bound $\opt_*(\eta)$ for bandit, full information and globally observable games.
All results follow from the usual arguments for bounding the stability term in the regret guarantee
for exponential weights in \cref{thm:exp}.

\begin{proposition}\label{prop:bandit}
For bandit games, $\opt_*(\eta) \leq k/2$ for all $\eta > 0$.
\end{proposition}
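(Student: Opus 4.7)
The plan is to exhibit an explicit feasible pair $(G, p)$ in the optimisation problem~\cref{eq:opt} that already achieves the bound $k/2$, independently of $q \in \cP$ and $\eta > 0$. Since $\opt_q(\eta)$ is defined as the infimum, any feasible pair yields an upper bound, and taking the supremum over $q$ then gives the bound on $\opt_*(\eta)$.

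The natural candidates for bandit games (where $\Phi = \cL$) are $p = q$ and the standard importance-weighting estimator $G(a, \sigma) = e_a \sigma$, with the convention $G(a, \sigma) = 0$ whenever $q_a = 0$ so that the corresponding summand in the stability term is unambiguously zero. First I would verify that the two non-stability contributions in~\cref{eq:opt} vanish. The term $(p - q)^\top \cL e_x$ is trivially zero by $p = q$. For the bias, observe that $\sum_a G(a, \Phi_{ax}) = \sum_{a \in \Pi} e_a \cL_{ax}$, so its inner product with $q$ agrees with $\langle q, \cL e_x\rangle$ (because $q$ is supported on $\Pi$), and for every $c \in \Pi$ the $c$-th coordinate equals $\cL_{cx}$. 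Hence both summands defining $\bias_q(G ; x)$ are zero.

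Next I would bound the stability term. With $G(a, \Phi_{ax}) = e_a \cL_{ax}$, only the $b = a$ contribution survives in $\Psi_q(\eta G(a, \Phi_{ax})/p_a)$, so after multiplying by $p_a = q_a$ one is left with
\begin{align*}
\sum_{a : q_a > 0} q_a^2\left(\exp(-\eta \cL_{ax}/q_a) + \eta \cL_{ax}/q_a - 1\right)\,.
\end{align*}
Since $\cL \in [0,1]^{k \times d}$, the arguments $\eta \cL_{ax}/q_a$ are non-negative, so the inequality $\exp(-x) + x - 1 \leq x^2/2$ for $x \geq 0$ quoted in~\cref{eq:psi} applies to each term, giving a bound of $\eta^2 \cL_{ax}^2 / 2 \leq \eta^2/2$ per action. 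Summing over the at most $k$ active actions and dividing by $\eta^2$ yields the desired bound $k/2$ on the objective value at $(G, p)$, uniformly in $x$.

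There is essentially no substantive obstacle here: the argument is a direct verification that the standard bandit importance-weighting estimator, used with no reshaping of the proposal distribution, is already feasible and attains the right stability bound. The only subtlety worth handling explicitly in the write-up is the degenerate case $q_a = 0$, which the convention $G(a, \sigma) = 0$ cleanly disposes of.
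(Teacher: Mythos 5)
Your proposal is correct and is essentially identical to the paper's proof: the same witness $p=q$ and $G(a,\sigma)=e_a\sigma$, the same observation that the loss-shift and bias terms vanish, and the same use of the $x\ge 0$ branch of \cref{eq:psi} to bound the stability term by $\sum_a \cL_{ax}^2/2 \le k/2$. The only (immaterial) slip is the remark that ``$q$ is supported on $\Pi$'' — in the definition of $\opt_*(\eta)$ the supremum ranges over all $q\in\cP$, but the bias vanishes for arbitrary $q$ since $\sum_{a=1}^k e_a\cL_{ax}=\cL e_x$ coordinatewise, so nothing is lost.
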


\begin{proof}
Let $q \in \cP$ be arbitrary and let $p = q$ and $G(a, \sigma) = e_a \sigma$.
This corresponds to the usual importance-weighted estimators used for $k$-armed bandits.
Then
\begin{align*}
\opt_q(\eta) 
&\leq \max_{x \in [d]} \sum_{a=1}^k \frac{p_a}{\eta^2} \Psi_q\left(\frac{\eta G(a, \Phi_{ax})}{p_a}\right) 
= \max_{x \in [d]} \sum_{a=1}^k \frac{p_a}{\eta^2} \Psi_q\left(\frac{\eta \cL_{ax} e_a}{p_a}\right) 
\leq \max_{x \in [d]} \sum_{a=1}^k \frac{\cL_{ax}^2}{2}
\leq \frac{k}{2}\,, 
\end{align*}
where in the second last inequality we used \cref{eq:psi} and the fact that $p_a = q_a$.
\end{proof}

\begin{proposition}\label{prop:full}
For full information games, $\opt_*(\eta) \leq 1/2$ for all $\eta > 0$.
\end{proposition}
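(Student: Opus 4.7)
The plan is to mirror the proof of \cref{prop:bandit}, but use the full-information estimator suggested earlier in the excerpt, namely $G(a,\sigma) = q_a \sigma$, so that observing a single action's signal (which in full information contains the entire loss vector) can be reused to estimate all actions' losses. Set $p = q$ as well.

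First I would check that this choice makes both of the non-stability terms in \cref{eq:opt} vanish. Since $\Phi_{ax} = (\cL_{1x},\dots,\cL_{kx})^\top$, we have $G(a,\Phi_{ax})_b = q_a \cL_{bx}$, so $\sum_{a=1}^k G(a,\Phi_{ax})_b = \cL_{bx}$ for every $b$ and $x$. Hence $G \in \cHu$ and $\bias_q(G;x) = 0$. Moreover $(p-q)^\top \cL e_x = 0$ trivially. Thus
\begin{align*}
\opt_q(\eta) \leq \max_{x \in [d]} \frac{1}{\eta^2} \sum_{a=1}^k p_a \, \Psi_q\!\left(\frac{\eta G(a,\Phi_{ax})}{p_a}\right)
= \max_{x \in [d]} \frac{1}{\eta^2}\, \Psi_q(\eta \cL e_x),
\end{align*}
where the last equality uses $p_a = q_a$ so that the argument of $\Psi_q$ is the constant (in $a$) vector $\eta \cL e_x$, and $\sum_a p_a = 1$.

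Next, because $\cL \in [0,1]^{k \times d}$ we have $\eta \cL e_x \geq \zeros$, so the second branch of \cref{eq:psi} gives $\Psi_q(\eta \cL e_x) \leq \tfrac{1}{2} \eta^2 \|\cL e_x\|_{\diag(q)}^2 = \tfrac{1}{2}\eta^2 \sum_a q_a \cL_{ax}^2 \leq \tfrac{1}{2}\eta^2$. Dividing by $\eta^2$ and taking the supremum over $q \in \cP$ yields $\opt_*(\eta) \leq 1/2$.

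There is no real obstacle here; the only subtle point is to notice that in full information $G(a,\Phi_{ax})/p_a = \cL e_x$ is independent of $a$, which is exactly what removes the factor of $k$ seen in the bandit case and replaces the sum over actions by a single evaluation of $\Psi_q$.
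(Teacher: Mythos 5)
Your proof is correct and takes essentially the same approach as the paper, which likewise sets $p = q$ and $G(a,\sigma) = p_a\sigma$ and then bounds the stability term as in the bandit case. The details you supply (unbiasedness of $G$, the collapse of the sum over $a$ since $G(a,\Phi_{ax})/p_a = \cL e_x$ is independent of $a$, and the use of the second branch of \cref{eq:psi}) are exactly the steps the paper leaves implicit.
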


\begin{proof}
As in the previous proof let $p = q$, but now choose $G(a, \sigma) = p_a \sigma$, which is unbiased.
The argument then follows along the same lines as the proof of \cref{prop:bandit}.
\end{proof}

\begin{remark}
You might wonder whether these choices of $p$ and $G$ actually minimise \cref{eq:opt}, in which case the algorithm would reduce to Hedge
for full information games. As we show in \cref{app:hedge}, however, they do not. The minimisers of \cref{eq:opt} shift the loss estimates and play a distribution
that is close to $q$, but not exactly the same. A similar story holds for bandits.
\end{remark}

\begin{proposition}\label{prop:hard}
For non-degenerate globally observable games
there exists a constant $c_{\cG}$ depending only on $\Phi$ and $\cL$ such that 
for all $\eta \leq 1/c_{\cG}^2$,
\begin{align*}
\opt_*(\eta) \leq \frac{c_{\cG}}{\sqrt{\eta}}\,. 
\end{align*}
\end{proposition}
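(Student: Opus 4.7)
The plan is to adapt the forced-exploration argument familiar from adversarial bandits, using global observability through \cref{lem:est} to obtain an unbiased loss-difference estimator. Fix $q \in \cP$ supported on $\Pi$ (which is the only case that matters, since $Q_t$ in \cref{alg:simple} is always supported on $\Pi$). Let $G \in \cHu$ be the function constructed in the proof of \cref{lem:est} from some in-tree over $\Pi$, and set $M = \max_{a,\sigma,b} |G(a,\sigma)_b|$, a finite game-dependent constant. As the feasible pair for the optimisation problem in \cref{eq:opt}, take this $G$ together with $p = (1-\gamma)q + (\gamma/k)\ones$, where $\gamma \in (0,1]$ is a mixing rate to be tuned. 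This ensures $p_a \geq \gamma/k$ for every $a \in [k]$, keeping every importance weight bounded.

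First I would check that the bias term vanishes. Since $G \in \cHu$, the scalar $\beta_x := \sum_a G(a,\Phi_{ax})_c - \cL_{cx}$ is independent of $c$ for $c \in \Pi$. Because $q$ is supported on $\Pi$, the inner-product part of $\bias_q(G;x)$ equals $-\beta_x$ while the maximum over $c \in \Pi$ equals $+\beta_x$, so the two cancel. Next I would bound the drift by $(p - q)^\top \cL e_x \leq \gamma$. For the stability term, provided $\gamma \geq \eta k M$, the ratio $\eta G(a,\Phi_{ax})/p_a \geq -\ones$ pointwise, so the quadratic half of \cref{eq:psi} gives $\Psi_q(\eta G(a,\Phi_{ax})/p_a) \leq \eta^2 M^2 / p_a^2$; summing and dividing by $\eta^2$ yields $\eta^{-2} \sum_a p_a \Psi_q(\eta G(a,\Phi_{ax})/p_a) \leq M^2 k^2 / \gamma$.

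Combining the three pieces gives $\opt_q(\eta) \leq \gamma/\eta + M^2 k^2 / \gamma$, which is minimised at $\gamma^* = Mk\sqrt{\eta}$ with value $2Mk/\sqrt{\eta}$. The feasibility constraints $\gamma^* \leq 1$ and $\gamma^* \geq \eta kM$ both hold whenever $\eta \leq 1/(Mk)^2$, so taking $c_{\cG} = 2Mk$ delivers the claim for every $\eta \leq 1/c_{\cG}^2$. The main subtlety is not the argument itself but the size of $M$: unlike the locally observable case, the estimator of \cref{lem:est} chains the edge estimators $\est_e$ along paths of length up to $|\Pi|-1$ in the in-tree, so $M$ can compound multiplicatively with the path length. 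Consequently $c_{\cG}$ may grow exponentially in $d$, as noted in \cref{tab:bounds}, and no path-free construction appears to be available without local observability.
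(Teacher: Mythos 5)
Your argument is essentially the paper's own proof: the same unbiased $G\in\cHu$, the same mixture $p=(1-\gamma)q+\gamma\ones/k$, the same choice $\gamma=Mk\sqrt{\eta}$, and the same final bound $2Mk/\sqrt{\eta}$ (with your additional, and welcome, care about why the bias term cancels for $q$ supported on $\Pi$). The only slip is the constant: if $2Mk<1$ then $\eta\le 1/(2Mk)^2$ does not force $\eta\le 1$, so the requirement $\gamma\ge \eta kM$ (equivalently $\sqrt{\eta}\ge\eta$) can fail; take $c_{\cG}=\max\{1,2Mk\}$ as the paper does and everything goes through.
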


\begin{proof}
By the definition of a globally observable game there exists an unbiased estimation function $G \in \cHu$.
Let $\beta = \norm{G}_\infty$ and $c_{\cG} = \max\{1, 2k\beta\}$.
Then let $\gamma = k \beta \sqrt{\eta}$ and $p = (1 - \gamma) q + \gamma \ones / k$, which
is a probability distribution since $\gamma \in [0,1]$ for $\eta \leq 1/c_{\cG}^2$.
We claim that $\eta G(a, \Phi_{ax}) / p_a \geq -\ones$, which follows from the definitions of $\gamma$ and $\beta$ so that
\begin{align*}
p_a \ones \geq \frac{\gamma}{k} \ones = \beta \sqrt{\eta} \ones \geq \beta \eta \ones \geq \eta G(a, \Phi_{ax}) \,,
\end{align*}
where the second inequality uses the fact that $c_{\cG} \geq 1$ and $\eta \leq 1/c_{\cG}^2 \leq 1$.
To bound the objective notice that for any $x \in [d]$ it holds that 
\begin{align*}
\frac{1}{\eta} (p - q)^\top \cL e_x = \frac{\gamma}{\eta} (\ones / k - q)^\top \cL e_x\leq \frac{\gamma}{\eta} = \frac{k \beta}{\sqrt{\eta}}\,.
\end{align*}
For the second term in the objective, by \cref{eq:psi},
\begin{align*}
\frac{1}{\eta^2} \max_{x \in [d]} \sum_{a=1}^k p_a \Psi_q\left(\frac{\eta G(a, \Phi_{ax})}{p_a}\right) 
&\leq \max_{x \in [d]} \sum_{a=1}^k \frac{\norm{G(a, \Phi_{ax})}_{\diag(q)}^2}{p_a} \\
&\leq \frac{k}{\gamma} \max_{x \in [d]} \sum_{a=1}^k \ip{q, G(a, \Phi_{ax})^2} \\
&\leq \frac{k^2 \beta^2}{\gamma} 
= \frac{k \beta}{\sqrt{\eta}}\,. 
\end{align*}
The result follows by combining the previous two displays and the definition of $c_{\cG}$.
\end{proof}

\section{Locally observable games}\label{sec:bound-local}

Controlling $\opt_*(\eta)$ for locally observable games is more involved.
The main result of this section is a proof of the following proposition.

\begin{proposition}\label{prop:easy}
For locally observable non-degenerate games and $\eta \leq 1/(2m k^2)$,
\begin{align*}
\opt_*(\eta) \leq 3 m^2 k^3\,.
\end{align*}
\end{proposition}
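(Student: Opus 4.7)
The plan is to show that for any $q \in \cP$, there exists a feasible pair $(G, p)$ achieving objective value at most $3 m^2 k^3$. I would approach this via a minimax reformulation in the style used by the authors' earlier non-constructive Bayesian proof.

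First, observe that the objective in \cref{eq:opt} is convex in $(G, p)$: the perspective $(p_a, z) \mapsto p_a \Psi_q(z / p_a)$ is jointly convex (as noted after \cref{eq:psi}), $\bias_q(\cdot\,;x)$ is max-affine, and $(p-q)^\top \cL e_x$ is linear. It is also linear, hence concave, in any convex combination of the $e_x$. Replacing $\max_{x \in [d]}$ by $\max_{\nu \in \cD} \E_{x \sim \nu}[\cdot]$ and invoking Sion's minimax theorem, it suffices to fix an arbitrary $\nu \in \cD$ and exhibit $(G, p)$ whose expected objective under $x \sim \nu$ is at most $3 m^2 k^3$.

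With $\nu$ fixed, let $a^\star \in \argmin_{b \in \Pi} e_b^\top \cL \nu$. I would choose an in-tree $\cT$ over $\Pi$ rooted at $a^\star$, pick per-edge estimators $\est_e : [k] \times \Sigma \to \R$ supported on the endpoints of $e$ (available by local observability), and set $G(a, \sigma)_b = \sum_{e \in \tpath_{\cT}(b)} \est_e(a, \sigma)$ so that $G \in \cHu$; this kills the bias term and forces $G(a^\star, \cdot) \equiv 0$. For the sampling distribution I would take $p = (1-\gamma) q + \gamma u$, where $u \in \cP$ is a fixed exploration distribution tailored to $\cT$ (e.g.\ uniform on the Pareto-optimal actions adjacent to $a^\star$ in $\cT$, or more generally giving all of $\Pi$ mass at least $1/k$), and $\gamma = \Theta(m k^2 \eta)$. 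The transport term is then controlled via $(p-q)^\top \cL \nu = \gamma\,(u-q)^\top \cL \nu \le \gamma$, giving a contribution of $\gamma/\eta = O(m k^2)$ to the objective.

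The main obstacle is bounding the stability term $\eta^{-2}\sum_a p_a \Psi_q(\eta G(a, \Phi_{ax}) / p_a)$ uniformly in $\nu$. Three ingredients drive this: (i) for non-degenerate locally observable games, the $\est_e$ can be chosen with $\|\est_e\|_\infty = O(m)$ (a consequence of the characterisation of local observability used in the authors' earlier work), so that $\|G(a, \sigma)\|_\infty = O(m k)$; (ii) the in-tree structure of $G$ means that for a given signal value only $O(k)$ coordinates of $G(a, \sigma)$ are non-zero; (iii) because $p_a \ge \gamma/k$ for $a \in \Pi$, the inequality $\Psi_q(z) \le \|z\|_{\diag(q)}^2$ from \cref{eq:psi} applies, provided the condition $\eta \, G(a, \Phi_{ax})/p_a \ge -\ones$, and this is exactly what the hypothesis $\eta \le 1/(2 m k^2)$ ensures via the bound $\|G\|_\infty = O(m k)$ and $p_a \ge \gamma/k = \Omega(m k \eta)$. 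Collecting the terms yields a stability bound of order $\eta^{-2} \cdot k \cdot \gamma^{-1}\!\cdot\!(m k)^2 \cdot \eta^2 = O(m^2 k^3)$, and combining with the $O(m k^2)$ transport cost gives the claimed $3 m^2 k^3$. The fussy technical step is verifying the $O(m)$-boundedness of $\est_e$ and choosing $u$ so that $(u-q)^\top \cL \nu$ does not blow up; this is where the non-degeneracy assumption—guaranteeing that every Pareto-optimal cell has full dimension and shares codimension-one faces with its in-tree neighbours—does the real work.
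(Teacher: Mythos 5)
There is a genuine gap, and it sits exactly where the real work of the proposition is done: your choice of sampling distribution $p=(1-\gamma)q+\gamma u$ with a \emph{fixed} exploration distribution $u$ cannot give an $\eta$-independent bound. With $p_a\ge\gamma/k$ as your only lower bound on the played probabilities, the stability term is controlled by
\[
\frac{1}{\eta^2}\sum_{a} p_a\,\Psi_q\!\Big(\frac{\eta G(a,\Phi_{ax})}{p_a}\Big)\;\le\;\sum_b q_b\!\!\sum_{a\in\tpath_\cT(b)}\!\!\frac{m^2}{p_a}\;=\;\Theta\!\Big(\frac{m^2k^2}{\gamma}\Big),
\]
and this $1/\gamma$ is not an artefact: take $q$ concentrated on a leaf $b$ of $\cT$ far from the root, so that essentially all of the $\diag(q)$-mass is importance-weighted by actions $a$ on $\tpath_\cT(b)$ that are played with probability only $\gamma/k$. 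Since the transport cost $\gamma/\eta$ forces $\gamma=O(\eta\, m^2k^3)$, the stability term is $\Omega(1/(\eta k))$, which blows up as $\eta\to 0$ — this is precisely the $\gamma/\eta$ versus $1/\gamma$ trade-off of the \emph{globally} observable case (\cref{prop:hard}) and yields only $\opt_*(\eta)=O(1/\sqrt{\eta})$. Your final arithmetic hides this: $k\cdot\gamma^{-1}\cdot(mk)^2$ with $\gamma=\Theta(mk^2\eta)$ is $\Theta(mk/\eta)$, not $O(m^2k^3)$.

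What is missing is the water transfer operator (\cref{lem:water}), which is the paper's central device here. Instead of mixing $q$ with a fixed $u$, the paper replaces $q$ by $r=W_\nu(q)$, a $q$-\emph{dependent} redistribution with three properties: $(r-q)^\top\cL\nu\le 0$ (so the move is free in expected loss), $r_a\ge q_a/k$, and $r$ is increasing along the in-tree towards the root, so that $r_a\ge r_b$ for every $a\in\tpath_\cT(b)$. The last property is the whole point: it gives $q_b/r_a\le q_b/r_b\le k$, so the factor $q_b$ coming from $\norm{\cdot}^2_{\diag(q)}$ \emph{cancels} the importance weight $1/r_a$ along the entire path, and the stability term becomes $O(m^2k^3)$ with no $1/\gamma$ at all; the small uniform mixture $\gamma=\eta mk^2/2$ is then needed only to enforce $\eta G(a,\Phi_{ax})/p_a\ge-\ones$ so that \cref{eq:psi} applies, and its transport cost $\gamma/\eta=mk^2/2$ is harmless. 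Your minimax reduction, the in-tree construction of $G\in\cHu$, and the $\norm{\est_e}_\infty\le m/2$ bound (\cref{lem:bounds}) all match the paper, but without the $\cT$-monotone, loss-non-increasing redistribution of $q$ the argument proves the wrong rate. (Minor additional slips: rooting at $a^\star$ gives $G(a,\sigma)_{a^\star}=0$ for all $a$, not $G(a^\star,\cdot)\equiv 0$; and the existence of an in-tree along which $\cL\nu$ is decreasing — needed for the free-transport property — is itself a nontrivial geometric fact proved in \cref{app:water}.)
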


We make use of the water transfer operator, which is a
construction from our earlier paper that provides an exploration distribution suitable for locally observable games in the Bayesian setting.
The challenge in partial monitoring is that the observability structure only allows for pairwise comparison between neighbours. 
This is problematic when two non-neighbouring actions are played with high probability and the actions separating them are played with low probability.
Given distributions $q \in \cP$ and $\nu \in \cD$, the water transfer operator `flows' probability in $q$ towards the greedy action $a$ for which $\nu \in C_a$.
Then all loss differences can be estimated relative to the greedy action.
This decreases the variance of estimation without increasing the expected loss when the adversary samples its action from $\nu$. 

\begin{lemma}[\citealt{LS19pminfo}]\label{lem:water}
Suppose that $\cG$ is non-degenerate and locally observable and
$\nu \in \cD$. Then there exists a function $W_\nu : \cP \to \cP$ such that the following hold for all $q \in \cP$:
\begin{enumerate}
\item[(a)] The expected loss does not increase: $\left(W_\nu(q) - q\right)^\top \cL \nu \leq 0$.
\item[(b)] Action probabilities are not too small: $W_\nu(q)_a \geq q_a/k$ for all $a \in [k]$.
\item[(c)] Probabilities increase towards the root of some in-tree: there exists an in-tree $\cT \subseteq \cE$ over $[k]$ such that $W_\nu(q)_a \leq W_\nu(q)_b$ for all $(a, b) \in \cT$.
\end{enumerate}
\end{lemma}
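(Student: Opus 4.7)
The plan is to construct $W_\nu$ explicitly by routing probability mass toward the optimal action for $\nu$ along a carefully oriented in-tree in the neighbourhood graph, and then to verify the three properties in turn.

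First, I would select a greedy action $a^* \in \Pi$ with $\nu \in C_{a^*}$, which exists because $\cG$ is non-degenerate. The combinatorial core of the construction is an in-tree $\cT \subseteq \cE$ rooted at $a^*$ such that for every edge $(a,b) \in \cT$ we have $e_a^\top \cL \nu \geq e_b^\top \cL \nu$. Such a tree exists because non-degeneracy implies that the cells form a polytopal decomposition of $\cD$, so the neighbourhood graph on $\Pi$ is connected: any line segment from a point in $C_a$ to a point in $C_{a^*}$ crosses a sequence of $(d-2)$-dimensional common boundaries, each of which is a neighbour edge. Orienting each neighbour edge toward its lower-$\nu$-loss endpoint yields a directed graph in which $a^*$ is reachable from every vertex, and any spanning in-tree of this orientation gives $\cT$. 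Dominated actions outside $\Pi$ can be attached through any neighbour and inherit $W_\nu(q)_a = q_a$ without affecting the argument.

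Second, I would define $W_\nu(q)$ by a single bottom-up pass over $\cT$. Processing the nodes of $\cT$ in reverse topological order, let $r_a$ denote the mass present at $a$ after all of its descendants have been processed, so that $r_a$ equals $q_a$ plus the inflow from already-processed children. Transfer an amount $\tau_a \in [0,\, r_a - q_a/k]$ from $a$ to its parent $\mathrm{par}(a)$, choosing $\tau_a$ greedily to equalise $a$ with its parent's current mass whenever the floor $q_a/k$ permits. Property (b) is then immediate because each $a$ retains at least $q_a/k$. Property (c) holds with the tree $\cT$ itself by the construction of $\tau_a$. Property (a) telescopes: each elementary transfer along an edge $(a,b) \in \cT$ contributes $\tau_a\,(e_b - e_a)^\top \cL \nu \leq 0$ to $(W_\nu(q) - q)^\top \cL \nu$, since $\cT$ is loss-monotone by construction, and summing over all edges gives the desired inequality.

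The main obstacle will be reconciling the tree monotonicity required by (c) with the per-action floors required by (b): if the floor $q_a/k$ binds at some non-leaf $a$, then $a$ may be unable to forward enough mass to keep its parent at least as heavy as itself. To resolve this I would argue inductively on tree depth that the parent's running mass, which aggregates $q_b$ plus inflow from all of its children, dominates the residual of any single child after subtracting $q_c/k$; the factor $1/k$ is exactly what gives the slack, since there are at most $k$ actions competing for mass at any node. A secondary subtlety is the existence of the loss-monotone in-tree $\cT \subseteq \cE$, which uses both non-degeneracy (so that cells are full-dimensional and meet along codimension-one faces) and local observability (so that the estimators $\est_e$ live on neighbour edges, making $\cE$ the correct combinatorial object for the downstream estimator construction in \cref{lem:est}).
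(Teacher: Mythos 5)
Your overall architecture matches the paper's: first exhibit an in-tree $\cT \subseteq \cE$ rooted at the greedy action along which $e_a^\top \cL \nu$ is monotone, then redistribute mass up that tree subject to the floor $q_a/k$. The tree-existence step is essentially the paper's geometric argument (the paper is more careful: it perturbs the chord to avoid the set of triple intersections of dimension $\le d-3$ so that each crossing really is a neighbour edge with a \emph{strict} loss decrease, and it handles $\nu$ on cell boundaries by a limiting argument over trees; your sketch glosses over both, and also over the fact that $\cE$ only joins Pareto optimal actions). The genuine gap is in the redistribution step: your greedy bottom-up ``equalise with the parent, respecting the floor'' scheme does not satisfy property (c). The failure is not the one you flag (the floor binding); it is that after a child equalises with its parent, the parent is processed later and ships mass onward to the grandparent, dropping below the child. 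Concretely, take the path $1 \to 2 \to 3$ with root $3$, $k = 3$ and $q = (1,0,0)$. Processing node $1$: the floor permits equalisation, so $\tau_1 = 1/2$ and both nodes $1$ and $2$ hold $1/2$. Processing node $2$: it equalises with node $3$, ending with $1/4$ each. The result $(1/2, 1/4, 1/4)$ violates (c) on the edge $(1,2)$, and taking $\tau_a$ maximal instead gives $(1/3, 0, 2/3)$, which also violates (c). Your proposed inductive repair (parent's running mass dominates each child's residual) does not address this, because the quantity that must dominate is the parent's \emph{final} mass, after its own outflow.

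The paper's construction avoids the issue entirely with a closed form rather than a sequential transfer: letting $d_\cT(b)$ be the depth of $b$ (root at depth $1$) and writing the descendants of $a$ (including $a$) for the summation index, it sets
\begin{align*}
r_a = \sum_{b \,:\, a \text{ lies on the path from } b \text{ to the root}} \frac{q_b}{d_\cT(b)}\,,
\end{align*}
i.e.\ each $q_b$ is split equally among the $d_\cT(b)$ vertices on the path from $b$ to the root. Monotonicity along $\cT$ is then automatic (an ancestor sums over a superset of descendants), the floor holds since $r_a \ge q_a/d_\cT(a) \ge q_a/k$, and $\ip{r - q, \cL\nu} \le 0$ follows from a one-line rearrangement using that $\cL\nu$ is $\cT$-decreasing. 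You would need to replace your transfer scheme with this (or some other construction for which (c) is actually verifiable) for the proof to go through.
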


A simplified proof of the above lemma is provided for completeness in \cref{app:water}.

\begin{proof}[Proof of \cref{prop:easy}]
Let $q \in \cP$.
By Sion's minimax theorem
\begin{align*}
\opt_q(\eta)
&\leq \min_{G \in \cHu, p \in \cP} \max_{\nu \in \cD} \left[\frac{1}{\eta} (p - q)^\top \cL \nu + \frac{1}{\eta^2} \sum_{x=1}^d \nu_x \sum_{a=1}^k p_a \Psi_q\left(\frac{\eta G(a, \Phi_{ax})}{p_a}\right)\right] \\
&= \max_{\nu \in \cD} \min_{G \in \cHu, p \in \cP} \left[\frac{1}{\eta} (p - q)^\top \cL \nu + \frac{1}{\eta^2} \sum_{x=1}^d \nu_x \sum_{a=1}^k p_a \Psi_q\left(\frac{\eta G(a, \Phi_{ax})}{p_a}\right)\right] \,,
\end{align*}
where in the first inequality we added the constraint that $G \in \cHu$, which zeros the bias term. 
Let $\nu \in \cD$ and let $\cT$ and $r = W_\nu(q)$ be the in-tree over $[k]$ and distribution in $\cP$ provided by the water transfer operator (\cref{lem:water}), respectively.
Define $G \in \cHu$ by
\begin{align*}
G(a, \sigma)_b = \sum_{e \in \tpath_\cT(b)} \est_e(a, \sigma) \,.
\end{align*}
By \cref{lem:bounds} and the assumption that $\cG$ is non-degenerate, $\est_e$ can be chosen so that $\norm{\est_e}_\infty \leq m/2$.
Since paths in $\cT$ have length at most $k$ it follows that
\begin{align*}
\norm{G}_\infty \leq k m / 2\,.
\end{align*}
Furthermore, $G \in \cHu$ by the proof of \cref{lem:est}.
Then let $\gamma = \eta m k^2 / 2$ and $p = (1 - \gamma) r + \gamma \ones / k$, which means that for any $x \in [d]$, 
\begin{align*}
\frac{\eta G(a, \Phi_{ax})}{p_a} \geq -\frac{\eta m k^2}{2\gamma} = -1\,.
\end{align*}
Additionally, the assumption that $\eta \leq 1/(mk^2)$ means that $\gamma \leq 1/2$ so that $r \geq p/2$.
Hence, by \cref{eq:psi} and using Parts (b) and (c) of \cref{lem:water} with the definition of $r$,
\begin{align*}
\frac{1}{\eta^2} \sum_{a=1}^k p_a \Psi_q\left(\frac{\eta G(a, \Phi_{ax})}{p_a}\right)
&\leq\sum_{a=1}^k \frac{\norm{G(a, \Phi_{ax})}^2_{\diag(q)}}{p_a} \\
&\leq 2\sum_{a=1}^k \frac{\norm{G(a, \Phi_{ax})}^2_{\diag(q)}}{r_a} \\
&= 2\sum_{b=1}^k \sum_{a=1}^k \frac{q_b}{r_a} \left(\sum_{e \in \tpath_{\cT}(b)} \est_e(a, \Phi_{ax}) \right)^2 \\
&\leq \frac{m^2}{2}\sum_{b=1}^k \sum_{a=1}^k \frac{q_b}{r_a} \left(\sum_{e  \in \tpath_{\cT}(b)} \one{a \in e}\right)^2 \\
&\leq 2k^3m^2\,,
\end{align*}
where we used Part (b) of \cref{lem:water} to show that $q_b \leq k r_b$ and Part (c) to show that $r_a \geq r_b$ for $a \in \tpath_{\cT}(b)$.
Finally,
\begin{align*}
\frac{1}{\eta} (p - q)^\top \cL \nu 
&= \frac{1}{\eta}(r - q)^\top \cL \nu + \frac{\gamma}{\eta} (\ones / k - r)^\top \cL \nu
\leq \frac{\gamma}{\eta} (\ones / k - r)^\top \cL \nu
\leq \frac{\gamma}{\gamma} = m k^2 \leq k^3 m^2\,.
\end{align*}
Hence $\opt_q(\eta) \leq 3 k^3 m^2$.
\end{proof}

\begin{remark}
The bound can be improved to $\opt_q(\eta) \leq 3 k m^2 \diam(\cE)^2$, where $\diam(\cE)$ is the diameter of the neighbourhood graph.
\end{remark}

\section{Discussion}\label{sec:discussion}

We introduced a new algorithm for finite partial monitoring that is efficient, nearly parameter free and enjoys roughly the 
best known regret in all classes of games. Notably, this is the first efficient algorithm for which the regret is independent of arbitrarily large game-dependent
constants for locally observable non-degenerate games.
A natural criticism of previous algorithms for partial monitoring is that the algorithms are generally quite conservative and not practical for normal problems.
As far as we can tell, the proposed algorithm does not suffer from this problem, at least recovering standard bounds in bandit and full information settings. 
In certain cases the algorithm may also adapt to the choices of the adversary.
The principle for finding an exploration distribution and estimation procedure is generic and may work well in other problems.

\paragraph{Lower bounds}
The best known lower bound for locally observable partial monitoring games is either $\Omega(\sqrt{kn})$ or $\Omega(d\sqrt{n})$, 
which are witnessed by a standard Bernoulli bandit \citep{ACFS95} and a result by the authors \citep{LS18pm}.
If pressed, we would speculate that $\Theta(d \sqrt{kn})$ is the correct worst-case regret over all $d$-outcome $k$-action non-degenerate locally
observable partial monitoring games, at least as $n$ tends to infinity. 

\paragraph{High probability bounds}
By replacing the bias term in \cref{eq:opt} with a constraint on a certain moment-generating function the algorithm can be adapted to prove
high probability bounds. Details are provided in \cref{sec:hp}.

\paragraph{Infinite outcome spaces}
Finiteness of the outcome space was not used in the proofs of \cref{thm:main} or \cref{thm:adaptive} and in particular the results in \cref{tab:bounds} continue to hold in this case 
The main cost of infinite outcome spaces is that the optimisation problem \cref{eq:opt} is unlikely to be tractable without additional structure.
Classic examples of infinite games for which the regret can be well controlled are bandit and full information games.
In both games the outcomes $(x_t)_{t=1}^n$ are chosen in $\cX = [0,1]^k$ and $\cL(a, x) = x_a$ (using the notation of \cref{rem:infinite}).
The signal function is $\Phi(a,x) = x_a$ for bandits and $\Phi(a,x) = x$ for the full information games.
Exploring the existence of a simple classification theorem for infinite-outcome games is an interesting future direction.
Understanding when \cref{eq:opt} is tractable is also intriguing.

\paragraph{Game-dependent bounds}
One of the objectives of this work was to design an efficient algorithm for which the regret does not depend on arbitrarily large game-dependent constants.
Naturally it is desirable to have small game-dependent constants and adaptivity to the choices of the adversary. 
\cref{tab:bounds} provides upper bounds on $\opt_*(\eta)$ for various classes, but the actual values depends on the game.
Understanding the dependence of this optimisation problem on the structure of the loss and signal matrices is an interesting open direction.
Also interesting is whether or not $\opt_*(\eta)$ is a fundamental quantity for the difficulty of the game and/or the regret of our algorithms.

\paragraph{Adaptivity}
\cref{alg:adaptive} already exhibits some adaptivity in the lucky situation that $V_t$ is small. This is not entirely satisfactory, however,
since $V_t$ is a random variable that depends on the choices of both the learner and the adversary.
We anticipate that all the usual enhancements for adaptivity -- log barrier, biased estimates and optimism -- can be applied here \citep[for example]{RS13,BCL17,CL18,BLL19}.
A related challenge would be to seek a best-of-both-worlds result, perhaps using the INF potential \citep{ZLW19}.

\paragraph{Beyond exponential weights}
The objective in \cref{eq:opt} is chosen so that the terms in \cref{eq:exp} are well controlled, which corresponds to bounding the stability term in the regret analysis of exponential weights. 
Other algorithms can be obtained by replacing exponential weights with follow the regularized leader and Legendre potential $F$. A standard regret bound (holding under certain
technical conditions) is
\begin{align}
\label{eq:ftrl}
\E[\Reg_n] 
&\leq \frac{\diam_F(\cP)}{\eta} + \frac{1}{\eta} \E\left[\sum_{t=1}^n \sum_{a=1}^k P_{ta} D_{F^*}\left(\nabla F(Q_t) - \frac{\eta G_t(a, \Phi_{ax_t})}{P_{ta}}, \nabla F(Q_t)\right)\right] \\
&\qquad\qquad\qquad\qquad\qquad + \E\left[\sum_{t=1}^n (P_t - Q_t)^\top \cL e_{x_t} + \bias_{Q_t}(G_t ; x_t)\right]\,. \nonumber
\end{align}
where $\diam_F(\cP) = \max_{x,y \in \cP} F(x) - F(y)$ is the diameter and $D_{F^*}(x, y)$ is the Bregman divergence between $x$ and $y$ with respect to the Fenchel conjugate of $F$. 
Let
\begin{align*}
\Psi_q(z) = D_{F^*}(\nabla F(q) - z, \nabla F(q))\,.
\end{align*}
Then convexity of $F^*$ implies that the perspective $(p, z) \mapsto p \Psi_q(z / p)$ is also convex for $p > 0$. 
When $F$ is the unnormalised negentropy, the definition above reduces to \cref{def:psi}.
All this means that the same approach holds more broadly for other potentials, which carry certain advantages in some settings \citep[and others]{AB09,BCL17,CL18,BLL19}.
For more details on follow the regularised leader and bounds of the form in \cref{eq:ftrl}, see \citep[Chapter 28]{LS19bandit-book} and \citep{Haz16}. 
We leave a deeper exploration of these ideas for the future.

\paragraph{Degenerate games}
The non-degeneracy assumption is purely for simplicity.
Only the proof of \cref{prop:easy} and its dependents need to be modified in minor ways. The notable difference is that the magnitude of the estimation vectors
is no longer guaranteed to be small. More specifically,
\cref{lem:bounds} does not hold when estimating loss differences between actions $(a, b)$ for which there are degenerate actions $c$ with $C_c = C_a \cap C_b$.
As in our previous work, using \cref{lem:bounds-global} instead introduces constants that may be exponential in $d$, which we believe is unavoidable \citep{LS19pminfo}.
Duplicate actions can be handled similarly and have the same affect.

\paragraph{Connections between stability and the information ratio}
\cite{ZL19} have shown that the generalised information ratio can be bounded by a worst-case bound on the stability term of mirror descent, which
makes a connection between the information-theoretic tools and those from online convex optimisation. Here we work in the other direction, using duality and the techniques
for bounding the information ratio to bound the stability term.  
The argument does not provide an equivalence between stability and the information ratio, but perhaps reinforces the feeling that there is an interesting connection here.

\paragraph{Acknowledgements}
We are grateful to Andr\'as Gy\"orgy for many insightful comments.

\appendix

\bibliographystyle{apalike}
\bibliography{pm-simple}

\section{High probability bounds}\label{sec:hp}

The same design principle can be used to construct algorithms for which the regret is controlled with high probability.
The idea is to replace the bias term in the objective with constraints on the range
of the loss estimators and on an appropriately chosen moment-generating function.
Given $q \in \cP$ and $\eta > 0$, let $\opthp_q(\eta)$ be the solution to the following optimisation problem:

\begin{mdframed}[roundcorner=1pt,backgroundcolor=black!5!white]
\begin{equation}
\label{eq:opt-hp}
\begin{alignedat}{3}
&\underset{G \in \cH, p \in \cP,\, \lambda \geq 0}{\text{minimise}} \qquad && \lambda + \frac{2}{\eta^2}\max_{x \in [d]}
  \sum_{a=1}^k p_a \Psi_q\left(\frac{\eta G(a, \Phi_{ax})}{p_a}\right) \\ 
&\text{subject to} && \max_{c \in [k]} \sum_{a=1}^k p_a \exp\left(\eta\left(\cL_{ax} - \cL_{cx} - \frac{\ip{q - e_c,\,  G(a, \Phi_{ax})}}{p_a} \right)\right) \leq \exp(\lambda \eta^2) \\
&\text{and}        && \eta \norm{G(a, \sigma)}_\infty \leq p_a \text{ for all $a$ and $\sigma$}\,.
\end{alignedat}
\end{equation}
\end{mdframed}

The optimisation problem in \cref{eq:opt-hp} is not convex, but the solution can be approximated efficiently within a factor of two.
Let $\opthp_q(\eta, \lambda)$ be the optimal value of \cref{eq:opt-hp} with a fixed value of $\lambda$, which is convex.
A larger value of $\lambda$ leads to a larger constraint set and hence $\lambda \mapsto \opthp_q(\eta, \lambda) - \lambda$ is decreasing.
Then the bisection method can be used to find (approximately) the value of $\lambda$ such that $\opthp_q(\eta, \lambda) = 2\lambda$ and you can
check that for this choice $\opthp_q(\eta, \lambda) \leq 2 \opthp_q(\eta)$.
We also define
\begin{align*}
\opthp_*(\eta) &= \sup_{q \in \cP} \opthp_q(\eta)\,.
\end{align*}
The algorithm is exactly the same as \cref{alg:simple} except that the optimisation problem in \cref{eq:opt-hp} is used instead of \cref{eq:opt}.

\begin{algorithm}[h!]
\begin{simplealg}

\textbf{input:} $\eta$

\textbf{for} $t = 1,\ldots,n$: 

\algind Compute $\displaystyle Q_{ta} = \frac{\sind_\Pi(a) \exp\left(-\eta \sum_{s=1}^{t-1} \hat y_{sa}\right)}{\sum_{b \in \Pi} \exp\left(-\eta \sum_{s=1}^{t-1} \hat y_{sb}\right)}$ \\[0.1cm]

\algind Solve (\ref{eq:opt-hp}) with $q = Q_t$ to find $\lambda_t \in \R$ and $P_t \in \cP$ and $G_t \in \cH$ \\[0.1cm]

\algind Sample $A_t \sim P_t$ and observe $\sigma_t$ \\[0.1cm]

\algind Compute $\displaystyle \hat y_t = \frac{G_t(A_t, \sigma_t)}{P_{tA_t}}$ 
\end{simplealg}
\caption{Exponential weights for partial monitoring}
\label{alg:hp}
\end{algorithm}

\begin{theorem}\label{thm:hp}
With probability at least $1 - 2\delta$ the regret of \cref{alg:hp} is bounded by
\begin{align*}
\Reg_n \leq \frac{\log(k) + 2\log(1/\delta)}{\eta} + \eta n \opthp_*(\eta) \,.
\end{align*}
\end{theorem}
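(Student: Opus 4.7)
The plan is to combine three ingredients: \textbf{(i)} a supermartingale argument driven by the moment-generating function (MGF) constraint in (\ref{eq:opt-hp}) that controls the true loss differences by the estimated ones; \textbf{(ii)} the deterministic exponential weights regret bound (\ref{thm:exp}) applied to the estimated losses $\hat y_t$; and \textbf{(iii)} a Chernoff-type concentration, enabled by the range constraint $\eta\|G_t(a,\sigma)\|_\infty \le P_{ta}$, that controls the random stability term $\sum_t \Psi_{Q_t}(\eta \hat y_t)$ by its conditional expectation. Crucially, because the adversary is oblivious, the comparator $a^* = \argmin_{c \in \Pi} \sum_t \cL_{c x_t}$ is a deterministic element of $\Pi$, so no union bound over actions is required in step (i); this is what produces the $\log(1/\delta)$ (rather than $\log(k/\delta)$) contribution.

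For step (i), I would instantiate the MGF constraint with $q = Q_t$, $p = P_t$, $G = G_t$, $\lambda = \lambda_t$, $c = a^*$, $x = x_t$ and recognise the sum $\sum_a P_{ta}\exp(\cdot)$ as a conditional expectation over $A_t \sim P_t$, yielding
\[
\E\!\left[\exp\!\left(\eta(\cL_{A_t x_t} - \cL_{a^* x_t}) - \eta(Q_t - e_{a^*})^\top \hat y_t - \eta^2 \lambda_t\right) \,\Big|\, \cF_{t-1}\right] \le 1\,.
\]
Hence $M_t = \prod_{s \le t}\exp(\cdot)$ is a non-negative supermartingale with $M_0 = 1$, and Markov gives, with probability at least $1-\delta$,
\[
\sum_{t=1}^n (\cL_{A_t x_t} - \cL_{a^* x_t}) \le \frac{\log(1/\delta)}{\eta} + \sum_{t=1}^n (Q_t - e_{a^*})^\top \hat y_t + \eta \sum_{t=1}^n \lambda_t\,.
\]
Step (ii) is the purely algebraic inequality (\ref{thm:exp}) applied to the sequence $(\hat y_t)$, giving $\sum_t (Q_t - e_{a^*})^\top \hat y_t \le \log(k)/\eta + (1/\eta)\sum_t \Psi_{Q_t}(\eta \hat y_t)$.

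For step (iii), the range constraint forces $\eta\hat y_t \in [-\ones,\ones]$, so (\ref{eq:psi}) yields $Z_t := \Psi_{Q_t}(\eta \hat y_t) \le \eta^2\|\hat y_t\|_{\diag(Q_t)}^2 \le 1$. For $Z \in [0,1]$ the convexity bound $e^Z \le 1 + (e-1)Z$ combined with $1+x \le e^x$ gives $\E[e^{Z_t}\mid\cF_{t-1}] \le \exp((e-1)\E[Z_t\mid\cF_{t-1}])$, so $\exp\!\bigl(\sum_{s \le t} Z_s - (e-1)\sum_{s \le t} \E[Z_s\mid\cF_{s-1}]\bigr)$ is a supermartingale. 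Markov's inequality together with $e-1 \le 2$ then yields, with probability at least $1-\delta$,
\[
\sum_{t=1}^n \Psi_{Q_t}(\eta \hat y_t) \le 2\sum_{t=1}^n s_t + \log(1/\delta)\,,\qquad s_t := \sum_{a=1}^k P_{ta}\Psi_{Q_t}\!\left(\tfrac{\eta G_t(a,\Phi_{a x_t})}{P_{ta}}\right) = \E[Z_t\mid\cF_{t-1}]\,.
\]

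Taking a union bound over the events in (i) and (iii) (failure probability at most $2\delta$), substituting (ii) and rearranging gives
\[
\Reg_n \le \frac{\log(k) + 2\log(1/\delta)}{\eta} + \eta\sum_{t=1}^n \lambda_t + \frac{2}{\eta}\sum_{t=1}^n s_t\,.
\]
Finally, the definition $\opthp_{Q_t}(\eta) = \lambda_t + (2/\eta^2)\max_{x} s(Q_t, P_t, G_t, x) \ge \lambda_t + (2/\eta^2)s_t$ collapses the last two terms to $\eta \sum_t \opthp_{Q_t}(\eta) \le \eta n \opthp_*(\eta)$, matching the claim. The main obstacle is \emph{calibrating the factor $2$} appearing in the objective of (\ref{eq:opt-hp}) with the constant $e-1$ produced by the Chernoff step on the stability term; this is what allows steps (i) and (iii) to combine with no slack. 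A secondary subtlety is ensuring predictability, i.e.\ that $(Q_t, P_t, G_t, \lambda_t)$ are $\cF_{t-1}$-measurable and the MGF constraint holds at the realised outcome $x_t$, but both are immediate from the algorithm and from the constraint being required for all $x \in [d]$.
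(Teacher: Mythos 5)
Your proposal is correct and follows essentially the same route as the paper's proof: the MGF constraint in (\ref{eq:opt-hp}) drives a supermartingale/Chernoff bound relating the true loss differences to the estimated ones, the deterministic bound (\ref{thm:exp}) handles the estimated losses, and a second Chernoff step (enabled by the range constraint) replaces the realised stability term by twice its conditional expectation, which the objective of (\ref{eq:opt-hp}) is calibrated to absorb. The only cosmetic difference is your use of $e^Z \le 1 + (e-1)Z$ on $[0,1]$ where the paper uses $e^x \le 1 + x + x^2$ together with $\E[X^2] \le \E[X]$; both give the same factor of $2$.
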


\begin{proof}
Let $(\lambda_t)_{t=1}^n$ be the sequence of real values as defined in the algorithm.
Using \cref{lem:hp}, the regret is bounded with probability at least $1 - \delta$ by
\begin{align*}
\Reg_n 
&= \sum_{t=1}^n \left(\cL_{A_t x_t} - \cL_{a^* x_t}\right) 
\leq \frac{\log(1/\delta)}{\eta} + \eta \sum_{t=1}^n \lambda_t + \sum_{t=1}^n \sum_{b=1}^k Q_{tb} \left(\hat y_{tb} - \hat y_{ta^*}\right)\,.
\end{align*}
The second sum is bounded as in the proof of \cref{thm:main} using \cref{thm:exp} by
\begin{align*}
\sum_{t=1}^n \sum_{b=1}^k Q_{tb}\left(\hat y_{tb} - \hat y_{ta^*}\right) 
&\leq \frac{\log(k)}{\eta} + \frac{1}{\eta} \sum_{t=1}^n \Psi_q\left(\eta \hat y_t\right)\,.
\end{align*}
Let $\E_{t-1}[\cdot]$ denote the expectation conditioned on the history observed after $t-1$ rounds.
The last constraint in \cref{eq:opt-hp} ensure that $|\eta \hat y_t| \leq 1$.
Therefore $\Psi_q(\eta \hat y_t) \in [0,1]$ and
\begin{align*}
\E_{t-1}\left[\exp\big(\Psi_{Q_t}(\eta \hat y_t) - \E_{t-1}[\Psi_{Q_t}(\eta \hat y_t)]\big)\right]
&\leq 1 + \E_{t-1}\left[(\Psi_{Q_t}(\eta \hat y_t))^2\right]
\leq \exp\left(\E_{t-1}\left[\Psi_{Q_t}(\eta \hat y_t)\right]\right)\,,
\end{align*}
where we used that $\exp(x) \leq 1 + x + x^2$ for $x \leq 1$ and that $\E[X^2] \leq \E[X]$ for random variables $X \in [0,1]$ and finally that
$1 + x \leq \exp(x)$.
Hence, another application of \cref{lem:hp} shows that with probability at least $1 - \delta$,
\begin{align*}
\frac{1}{\eta}\sum_{t=1}^n \Psi_{Q_t}\left(\eta \hat y_t\right) 
&\leq \frac{2}{\eta} \sum_{t=1}^n \E_{t-1}\left[\Psi_{Q_t}\left(\eta \hat y_t\right)\right] + \frac{1}{\eta}\log\left(\frac{1}{\delta}\right) \\
&\leq \eta \sum_{t=1}^n \left(\opthp_{Q_t}(\eta) - \lambda_t\right) + \frac{1}{\eta}\log\left(\frac{1}{\delta}\right)\,.
\end{align*}
Combining the pieces shows that the regret is bounded with probability at least $1 - 2\delta$ by
\begin{align*}
\Reg_n 
&\leq \frac{\log(k) + 2\log(1/\delta)}{\eta} + \eta \sum_{t=1}^n \opthp_{Q_t}(\eta) 
\leq \frac{\log(k) + 2\log(1/\delta)}{\eta} + \eta n \opthp_*(\eta)\,. \qedhere
\end{align*}
\end{proof}

\begin{remark}
\cref{alg:hp} can be modified with a little effort to adapt the learning rate in a similar manner as \cref{alg:adaptive}. 
The analysis remains more-or-less the same except a version of \cref{lem:hp} must be proven for decreasing sequences of learning rates.
\end{remark}

\paragraph{Applications}
Like $\opt_q(\eta)$, the quantity $\opthp_q(\eta)$ is game-dependent. 
In all the applications that we know of the stability component of the optimisation problem in \cref{eq:opt}
can be bounding by choosing $p$ and $G$ so that the loss estimators do not have magnitude larger than $1/\eta$ and then
using the bounds on $\Psi_q$ in \cref{eq:psi}. The following lemma extracts the core assumptions needed for this argument. Afterwards we
give applications for full information, bandit and partial monitoring games.

\begin{lemma}\label{lem:hp-helper}
Let $q \in \cP$ and $\eta \in (0,1/2)$ and
suppose there exists a $p \in \cP$ and $G \in \cHu$ and $\varphi \in \R^k$ with $\varphi \geq \zeros$ such that for all actions $a$ and $b$ and outcomes $x$,
\begin{align}
\frac{\eta}{p_a} \norm{G(a, \Phi_{ax})}_\infty \leq \frac{1}{2} \qquad \text{and} \qquad
\sum_{a=1}^k \frac{G(a, \Phi_{ax})^2}{p_a} \leq \varphi \qquad \text{and}\qquad \eta^2 \varphi \leq \frac{\ones}{2}\,.
\label{eq:hp-cond}
\end{align}
Then $\opthp_q(\eta) \leq 1 + 12\ip{q, \varphi} + \frac{1}{\eta} \max_{x \in [d]} (p - q)^\top \cL e_x$. 
\end{lemma}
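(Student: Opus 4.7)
The plan is to exhibit a feasible triple $(G', p', \lambda)$ for the optimisation problem defining $\opthp_q(\eta)$ whose objective is at most the stated bound. I would take $G' = G$ and $p' = p$ given by the hypothesis and choose $\lambda$ as the smallest value for which the MGF constraint is satisfied.

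The range constraint $\eta \|G(a,\sigma)\|_\infty \le p_a$ is immediate (with a factor of two to spare) from the first inequality in \cref{eq:hp-cond}. The stability contribution to the objective is controlled by the second inequality: since $\eta G(a,\Phi_{ax})/p_a \ge -\ones/2 \ge -\ones$, the first branch of \cref{eq:psi} applies, giving $\Psi_q(\eta G(a,\Phi_{ax})/p_a) \le \eta^2 \|G(a,\Phi_{ax})/p_a\|^2_{\diag(q)}$. Summing over $a$ and using $\sum_a G(a,\Phi_{ax})^2/p_a \le \varphi$ coordinate-wise yields
\[
\frac{2}{\eta^2} \max_{x \in [d]} \sum_{a=1}^k p_a \Psi_q\!\left(\tfrac{\eta G(a,\Phi_{ax})}{p_a}\right) \le 2 \ip{q,\varphi}\,.
\]

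For the MGF constraint, fix $c \in [k]$ and $x \in [d]$, let $A \sim p$, and put
\[
Z \;=\; \eta\!\left(\cL_{Ax} - \cL_{cx} - \frac{\ip{q - e_c,\, G(A, \Phi_{Ax})}}{p_A}\right).
\]
Conditions~1 and 3 of \cref{eq:hp-cond} jointly imply $|Z| \le \eta + 1 \le 3/2$, which is inside the range where the Taylor inequality $e^Z \le 1 + Z + Z^2$ is valid. Hence $\E[\exp Z] \le 1 + \E[Z] + \E[Z^2]$. The mean is readily computed using $G \in \cHu$: the $\cL_{cx}$ and $e_c$ pieces cancel and leave $\E[Z] = \eta(p-q)^\top \cL e_x$. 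For the second moment I would decompose $Z^2 \le 2\eta^2(\cL_{Ax}-\cL_{cx})^2 + 2\eta^2 (\ip{q-e_c, G(A,\Phi_{Ax})}/p_A)^2$, apply Jensen's inequality to the inner product against $q - e_c$, and invoke condition~2 to get a bound of the form $\E[Z^2] \le 2\eta^2 + C\eta^2(\ip{q,\varphi} + \varphi_c)$ for a small constant $C$.

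After that, applying $\log(1+y) \le y$ and dividing by $\eta^2$ gives a choice of $\lambda$ which, added to the stability contribution $2 \ip{q,\varphi}$, matches the claimed bound up to constants. The main obstacle is the $\varphi_c$ term produced by the naive bound on $\E[Z^2]$: because $c$ ranges over all of $[k]$, this is not directly absorbed by $\ip{q,\varphi}$. The resolution is to use the third inequality of \cref{eq:hp-cond}, $\eta^2 \varphi \le \ones/2$, which bounds $\eta^2 \varphi_c$ by a universal constant, so that after dividing the log-MGF bound by $\eta^2$ the $\varphi_c$ contribution becomes an $O(1)$ additive term rather than growing in $1/\eta^2$. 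Careful tracking of constants through this step is what produces the $1 + 12\ip{q,\varphi}$ on the right-hand side; the final $\max_x (p-q)^\top \cL e_x/\eta$ arises from the linear-in-$Z$ term $\E[Z]/\eta^2$.
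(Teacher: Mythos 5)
You correctly identify the crux — the second moment of $Z$ contains a term of order $\eta^2\varphi_c$ coming from the $e_c$ component of $\ip{q-e_c, G(A,\Phi_{Ax})}/p_A$, and since the MGF constraint is a maximum over $c \in [k]$, this forces $\lambda \gtrsim \max_c \varphi_c$, which is not controlled by $\ip{q,\varphi}$. But your proposed resolution does not work: the constraint reads $\E[\exp Z] \le \exp(\lambda\eta^2)$, so after taking logarithms you must divide the bound $\E[Z] + \E[Z^2]$ by $\eta^2$, and the term $C\eta^2\varphi_c$ becomes $C\varphi_c$, not $O(1)$. The hypothesis $\eta^2\varphi \le \ones/2$ only tells you $\varphi_c \le 1/(2\eta^2)$, which can be far larger than $1 + 12\ip{q,\varphi}$ (e.g.\ in the bandit application $\varphi = 1/p$ and $\max_c \varphi_c$ is of order $1/\eta$ while $\ip{q,\varphi}$ is of order $k$). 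With the unshifted estimator $G' = G$ the constraint genuinely fails for the claimed $\lambda$, since $\log\E[e^Z] \approx \E[Z] + \tfrac12\mathrm{Var}(Z)$ and the variance really does carry the $\eta^2\varphi_c$ contribution.

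The missing idea is to feed a \emph{biased} estimator into the optimisation problem: take $G'(a,\sigma) = G(a,\sigma) - 3\eta p_a \varphi$ (the implicit-exploration shift of \cite{ACFS95} and \cite{AR09}). Then $-\ip{q - e_c, G'(a,\Phi_{ax})}/p_a = -\ip{q-e_c,G(a,\Phi_{ax})}/p_a + 3\eta\ip{q,\varphi} - 3\eta\varphi_c$, and the resulting factor $\exp(-3\eta^2\varphi_c)$ exactly cancels the $+3\eta^2\varphi_c$ produced by the second-moment bound, leaving only $\ip{q,\varphi}$ terms in $\lambda$. One then checks that the shift costs only an extra $O(\eta^2\ip{q,\varphi^2}) = O(\ip{q,\varphi})$ in the stability term (using $\eta^2\varphi \le \ones/2$, which is where that hypothesis is actually needed) and an extra $\eta^2\norm{\varphi}_\infty \le 1/2$ in the range constraint. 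The rest of your sketch — the choice of $p$, the Taylor bound $e^x \le 1+x+x^2$ on the range $|Z|\le 3/2$, the computation of $\E[Z]$ via unbiasedness of $G$, and the bound on the stability term via \cref{eq:psi} — matches the paper's argument and is fine.
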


\begin{proof}
Let $G'(a, \sigma) = G(a, \sigma) - 3\eta p_a \varphi$. At heart this is the same biased loss estimator used by \cite{ACFS95} and
generalised by \cite{AR09}. 
Define
\begin{align*}
\lambda = 1 + 6\ip{q, \varphi} + \frac{1}{\eta} \max_{x \in [d]} (p - q) \cL e_x\,.
\end{align*}
We now show that $G'$, $p$ and $\lambda$ satisfies the constraints in \cref{eq:opt-hp} and then that they provide a witness
to the claimed upper bound on $\opthp_q$. 
For any action $c$ and outcome $x$,
\begin{align}
&\sum_{a=1}^k p_a \exp\left(\eta \left(\cL_{ax} - \cL_{cx} - \frac{\ip{q - e_c, G'(a, \Phi_{ax})}}{p_a}\right)\right) \nonumber \\
&\qquad= \exp\left(3 \eta^2 \ip{q - e_c, \varphi} - \eta \cL_{cx}\right) \sum_{a=1}^k p_a \exp\left(\eta \left(\cL_{ax} - \frac{\ip{q - e_c, G(a, \Phi_{ax})}}{p_a}\right)\right)\,.
\label{eq:hp1}
\end{align}
The second term is bounded using $\exp(x) \leq 1 + x + x^2$ for $x \leq 1$,
\begin{align*}
&\sum_{a=1}^k p_a \exp\left(\eta \left(\cL_{ax} - \frac{\ip{q - e_c, G(a, \Phi_{ax})}}{p_a}\right)\right) \\
&\qquad\leq 1 + \eta\cL_{cx} + \eta (p - q)^\top \cL e_x + \eta^2 \sum_{a=1}^k p_a \left(\cL_{ax} - \frac{\ip{q - e_c, G(a, \Phi_{ax})}}{p_a}\right)^2 \\
&\qquad\leq 1 + \eta\cL_{cx} + \eta (p - q)^\top \cL e_x + 3\eta^2\left(1 + \sum_{a=1}^k \frac{\ip{q, G(a, \Phi_{ax})^2}}{p_a} + \sum_{a=1}^k \frac{\ip{e_c, G(a, \Phi_{ax})^2}}{p_a}\right) \\
&\qquad\leq\exp\left(\eta \cL_{cx} + \eta (p - q)^\top \cL e_x + 3\eta^2(1 + \ip{q + e_c, \varphi})\right)\,,
\end{align*}
where in the first inequality we used the fact that $G \in \cH_{\circ}$ is unbiased.
In the second we used that $(x + y + z)^2 \leq 3x^2 + 3y^2 + 3z^2$.
In the last we used the assumptions on $\varphi$ and $1 + x \leq \exp(x)$.
Combining with \cref{eq:hp1} shows that
\begin{align*}
\sum_{a=1}^k p_a \exp\left(\eta \left(\cL_{ax} - \cL_{cx} - \frac{\ip{q - e_c, G'(a, \Phi_{ax})}}{p_a}\right)\right) 
\leq \exp(\lambda \eta^2)\,,
\end{align*}
which confirms that the first constraint in \cref{eq:opt-hp} is satisfies for this choice of $G$ and $p$.
The second constraint in \cref{eq:opt-hp} is satisfied from the assumptions of the lemma:
\begin{align*}
\frac{\eta}{p_a} \norm{G'(a, \sigma)}_\infty
\leq \frac{\eta}{p_a} \norm{G(a, \sigma)}_\infty + \eta^2 \norm{\varphi}_\infty \leq 1\,.
\end{align*}
For the objective we have
\begin{align*}
\lambda + \frac{2}{\eta^2}\sum_{a=1}^k p_a \Psi_q\left(\frac{\eta G'(a, \Phi_{ax})}{p_a}\right)
&\leq \lambda + 2\sum_{a=1}^k \frac{\ip{q, G'(a, \Phi_{ax})^2}}{p_a} \\
&\leq \lambda + 4 \sum_{a=1}^k \frac{\ip{q, G(a, \Phi_{ax})^2}}{p_a} + 4 \eta^2 \ip{q, \varphi^2} \\
&\leq \lambda + 6 \ip{q, \varphi} \\
&= 1 + 12 \ip{q, \varphi} + \frac{1}{\eta} \max_{x \in [d]} (p - q)^\top \cL e_x\,. \qedhere
\end{align*}
\end{proof}

Using the same analysis as in the proofs of \cref{prop:full,prop:bandit,prop:easy,prop:hard} you can prove all the bounds in \cref{tab:hp}.
For example, \cref{lem:hp-helper} can be applied to  full information games by defining $G(a, \sigma) = p_a \sigma$ and $p = q$ and $\varphi = \ones$.
Then $\ip{q, \varphi} = 1$ and for $\eta \leq 1/2$ it follows that $\opthp_*(\eta) \leq 13$. And hence the familiar bound of $O(\sqrt{n \log(k/\delta)})$ 
is recovered using \cref{thm:hp}.
For bandits choose $G(a, \sigma) = e_a \sigma$ and $p = (1 - \gamma )q + \gamma \ones / k$ with $\gamma = k \eta$ and $\varphi = 1/p$.
Then $\opthp_*(\eta) \leq 1 + 13k$ and the regret is $O(\sqrt{nk \log(k/\delta)})$ as expected.

\begin{remark}
The bounds in \cref{tab:hp} are obtained by tuning the learning rate in a manner that depends on $\delta$. The learning rate can be tuned without the
knowledge of $\delta$, but then the dependence on $\log(1/\delta)$ moves outside the square root, a price that is known to be unavoidable \citep{GL16}.
\end{remark}

\begin{table}[h!]
\centering
\renewcommand{\arraystretch}{1.9}
\small
\begin{tabular}{|p{4.3cm}|p{6cm}|}
\hline 
\rowcolor{black!10!white}
 \textbf{Game type} & \textbf{Regret} \\ \hline
\cellcolor{black!10!white} \textsc{Full information} & $O\left(\sqrt{n \log(k/\delta)}\right)$ \\
\cellcolor{black!10!white} \textsc{Bandit} & $O\left(\sqrt{nk \log(k/\delta)}\right)$ \\
\cellcolor{black!10!white} \textsc{Globally observable} & $O\left((c_{\cG} n)^{2/3} \log(k/\delta)^{1/3}\right)$ \\
\cellcolor{black!10!white} \textsc{Locally observable \newline non-degenerate} & $O\left(mk^{3/2} \sqrt{n \log(k/\delta)}\right)$ \\ \hline
\end{tabular}
\caption{High probability regret upper bounds that hold for a given $\delta \in (0,1)$. 
The constant $c_{\cG}$ is game-dependent and can be exponentially large in $d$, which we believe is unavoidable.
}\label{tab:hp}
\end{table}

\section{Water transfer operator}\label{app:water}
Here we provide a simple proof of \cref{lem:water}.
Let $\cT$ be an in-tree over $[k]$.
A vector $y \in \R^k$ is called $\cT$-increasing if $y_a \leq y_b$ for all $(a, b) \in \cT$, which means the function $a \mapsto y_a$ is increasing
towards the root of $\cT$. Similarly, $y$ is $\cT$-decreasing if $y_a \geq y_b$ for all $(a, b) \in \cT$.

\begin{lemma}\label{lem:tree}
Given a tree $\cT$ over $[k]$ and $q \in \cP$, there exists an $r \in \cP$ such that:
\begin{enumerate}
\item [(a)] $r \geq q/k$.
\item [(b)] $r$ is $\cT$-increasing.
\item [(c)] $\ip{r - q, y} \leq 0$ for all $\cT$-decreasing $y \in \R^k$.
\end{enumerate}
\end{lemma}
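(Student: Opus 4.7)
The plan is to exhibit $r$ explicitly and verify the three properties in turn. Write $D(a) \subseteq [k]$ for the set of vertices whose path to the root in $\cT$ passes through $a$ (equivalently, $a$ together with all of its descendants in $\cT$), and $\pi(a)$ for the parent of a non-root vertex $a$. I set
\[
r_a = \max_{b \in D(a)} \frac{q_b}{k} \qquad \text{for each non-root } a,
\]
and then put $r_{\troot_\cT} = 1 - \sum_{a \neq \troot_\cT} r_a$, so that $r$ sums to $1$ by construction. Since $q_b \leq 1$ each non-root coordinate satisfies $r_a \leq 1/k$, giving $\sum_{a \neq \troot_\cT} r_a \leq (k-1)/k$ and hence $r_{\troot_\cT} \geq 1/k \geq q_{\troot_\cT}/k$; taking $b = a$ in the max also gives $r_a \geq q_a/k$ for non-root $a$. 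This proves (a).

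For (b), if $(a,b) \in \cT$ with $b$ non-root then $D(a) \subseteq D(b)$, so immediately $r_a \leq r_b$. When $b = \troot_\cT$, I use $r_a \leq 1/k$ together with $r_{a'} \leq 1/k$ for every non-root $a'$ to bound
\[
2 r_a + \sum_{a' \neq \troot_\cT,\, a' \neq a} r_{a'} \leq \frac{2}{k} + \frac{k-2}{k} = 1,
\]
which rearranges to $r_a \leq 1 - \sum_{a' \neq \troot_\cT} r_{a'} = r_{\troot_\cT}$.

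The substantive part is (c), which I would reformulate as a flow inequality on $\cT$. Setting $f_a = \sum_{b \in D(a)}(q_b - r_b)$ for non-root $a$, a direct summation by parts along the edges of $\cT$ — using that the subtrees rooted at the children of $a$ partition $D(a) \setminus \{a\}$ and that $\sum_a (r_a - q_a) = 0$ — yields
\[
\ip{r - q, y} = \sum_{(a,b) \in \cT} f_a \,(y_b - y_a).
\]
For $\cT$-decreasing $y$ each factor $y_b - y_a$ is non-positive, so it suffices to verify $f_a \geq 0$ for every non-root $a$. Writing $M_a = \max_{b \in D(a)} q_b$, my construction bounds $\sum_{b \in D(a)} r_b \leq |D(a)| \cdot M_a/k \leq M_a \leq \sum_{b \in D(a)} q_b$, where the middle step uses $|D(a)| \leq k$ and the last uses that $M_a$ itself appears as a term on the right. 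Hence $f_a \geq 0$.

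The only step that requires a bit of thought is the flow reformulation in (c): once one recognises that the desired inequality is additive over subtrees and converts to edge-wise non-negativity of $f$, the remaining estimate collapses to the essentially trivial $|D(a)| \cdot M_a/k \leq M_a$. Everything else is bookkeeping.
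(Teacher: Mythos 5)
Your proof is correct, but it is a genuinely different construction from the paper's. The paper defines $r_a = \sum_{b \in D(a)} q_b / d_\cT(b)$, splitting each $q_b$ equally among the vertices on its path to the root (here $d_\cT(b)$ is the depth of $b$); with that choice $r \in \cP$ holds automatically by a telescoping count, (a) is the single term $b=a$, (b) is the nesting of descendant sets, and (c) follows by bounding $y_a \le y_b$ termwise inside $\ip{r,y}$. You instead cap each non-root coordinate at $\max_{b \in D(a)} q_b / k$ and let the root absorb the leftover mass, which costs you a little extra bookkeeping (normalisation, the root case of (b)) but pays off in part (c): your Abel-summation identity
\begin{align*}
\ip{r - q, y} = \sum_{(a,b) \in \cT} f_a\,(y_b - y_a), \qquad f_a = \sum_{b \in D(a)} (q_b - r_b),
\end{align*}
isolates exactly what (c) requires, namely subtree-wise mass domination $\sum_{b \in D(a)} r_b \le \sum_{b \in D(a)} q_b$ for every non-root $a$, which your $|D(a)|\, M_a / k \le M_a \le \sum_{b \in D(a)} q_b$ chain then delivers. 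All steps check out (in particular, every $b \in D(a)$ for non-root $a$ is itself non-root, so your formula for $r_b$ applies throughout the sum, and the identity itself follows by writing $y_a - y_{\troot_\cT}$ as a telescoping sum over $\tpath_\cT(a)$ and using $\sum_a (r_a - q_a) = 0$). The paper's proof is shorter; yours makes the structure of condition (c) more transparent.
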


\newcommand{\desc}{\operatorname{desc}}

\begin{proof}
Let $\desc_\cT(a)$ be the descendants of $a$ in $\cT$ with the convention that $a \in \desc_\cT(a)$. Define $d_\cT(a)$ as the depth of $a$ in $\cT$ with $d_\cT(\troot_\cT) = 1$.
Define $r_a = \sum_{b \in \desc_\cT(a)} q_b / d_\cT(b)$, which is illustrated in \cref{fig:water}.
That $r \in \cP$ follows since
\begin{align*}
\sum_{a=1}^k \sum_{b \in \desc_\cT(a)} \frac{q_b}{d_\cT(b)}
&= \sum_{b=1}^k \frac{q_b}{d_\cT(b)} \sum_{a=1}^k \one{b \in \desc_\cT(a)}
= \sum_{b=1}^k q_b 
= 1\,.
\end{align*}
Part (a) follows because $d_\cT(b) \leq k$.
Part (b) follows because if $(a, b) \in \cT$, then $\desc_\cT(a) \subset \desc_\cT(b)$.
For the last part, the fact that $y$ is $\cT$-decreasing means that
\begin{align*}
\ip{r, y} 
&= \sum_{a=1}^k y_a \sum_{b \in \desc_\cT(a)} \frac{q_b}{d_\cT(b)} 
\leq \sum_{a=1}^k \sum_{b \in \desc_\cT(a)} \frac{y_b q_b}{d_\cT(b)} 
= \sum_{b=1}^k \frac{y_b q_b}{d_{\cT(b)}} \sum_{a=1}^k \one{b \in \desc_\cT(a)} 
= \ip{q, y}\,.
\end{align*}
Rearranging completes the proof.
\end{proof}

\begin{figure}[h!]
\centering
\includegraphics[width=14cm]{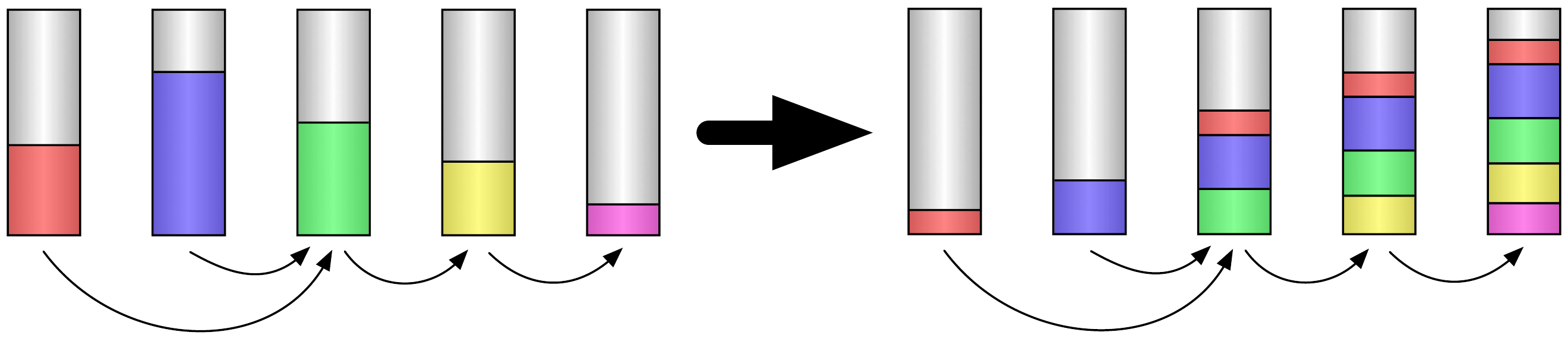}
\caption{Illustration of $r$ as defined in the proof of \cref{lem:water}.}\label{fig:water}
\end{figure}

\newcommand{\parent}{\operatorname{par}}
\newcommand{\ri}{\operatorname{ri}}

\begin{proof}[Proof of \cref{lem:water}]
The result follows from \cref{lem:tree} and by proving there exists an in-tree $\cT$ over $[k]$ such that $\cL \nu$ is $\cT$-decreasing.
We start by proving the existence of $\cT$ when $\nu \in \ri(C_{a^*_\nu})$ for some Parent optimal action $a^*_\nu$.
Define a function $\parent : [k] \to [k]$ by
\begin{align*}
\parent(a) = \argmin_{b : (a, b) \in \cE} e_b^\top \cL \nu\,,
\end{align*}
where the ties in the $\argmin$ are broken arbitrarily.
We will shortly show that $(e_a - e_{\parent(a)})^\top \cL \nu > 0$ for all $a \neq a^*_\nu$, which means that 
$\cT = \{(a, \parent(a)) : a \neq a^*_\nu\}$
is an in-tree over $[k]$ on which $\cL^\top\nu$ is $\cT$-decreasing.
Let $D = \{(a, b, c) : \dim(C_a \cap C_b \cap C_c) \leq d - 3\}$
and 
\begin{align*}
A = \bigcup_{a,b,c \in D} C_a \cap C_b \cap C_c\,.
\end{align*}
Let $a \neq a^*_\nu$ and $\mu \in \ri(C_a)$ be such that the chord connecting $\mu$ and $\nu$ does not intersect $A$.
Next, let $\rho \in \partial C_a$ be such that $\rho - \mu$ is proportional to $\nu - \mu$ and $b \neq a$ be an action with $\rho \in C_b$.
Since $\mu \in \ri(C_a)$ we have $e_a^\top \cL \mu < e_b^\top \cL \mu$ and since $\rho \in C_a \cap C_b$ we have $e_a^\top \cL \rho = e_b^\top \cL \rho$. 
Hence $e_b^\top \cL \nu < e_a^\top \cL \nu$.
The choice of $\mu$ ensures that $\rho \notin A$ and hence $(a, b) \in \cE$, which means that $\parent(a)$ is well defined and satisfies the claimed monotonicity conditions.
Suppose now that $\nu$ is arbitrary and $a^*_\nu \in C_\nu$.
Then take a sequence $(\nu_t)_{t=1}^\infty$ converging to $\nu$ and with $\nu_t \in \ri(a^*_\nu)$. 
By the previous argument there exists a sequence of in-trees $(\cT_t)_{t=1}^\infty$ such that $\cL \nu_t$ is $\cT_t$-decreasing.
Since the space of trees is finite, the sequence $(\cT_t)_{t=1}^\infty$ has a cluster point $\cT$ and it is easy to see that $\cL \nu$ is $\cT$-decreasing.
\end{proof}

\section{Bounds on the estimation functions}

The polynomial dependence on $k$ and $m$ in locally observable non-degenerate games follows from the simple combinatorial structure when loss differences
are estimated by playing two actions only. We provide the following lemma, which strengthens slightly our previous result \citep{LS18pm}.

\begin{lemma}\label{lem:bounds}
If $\cG = (\Phi, \cL)$ is locally observable and non-degenerate and actions $(a, b) \in \cE$ are neighbours, then there exist functions $w_a, w_b : \Sigma \to \R$ such that
$\norm{w_a}_\infty \leq m/2$ and $\norm{w_b}_\infty \leq m/2$ and
\begin{align}
\cL_{ax} - \cL_{bx} = w_a(\Phi_{ax}) + w_b(\Phi_{bx}) \text{ for all } x \in[d]\,. 
\label{eq:w-cond}
\end{align}
\end{lemma}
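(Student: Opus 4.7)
The plan is to exploit the non-uniqueness of the pair $(w_a, w_b)$ guaranteed by local observability: the set of valid pairs is an affine subspace, and I would pick a representative with minimal sup norm by absorbing an appropriate additive shift.

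First, by local observability applied to the edge $(a, b) \in \cE$ (using the non-degenerate form of the definition), there exist functions $\tilde w_a, \tilde w_b : \Sigma \to \R$ satisfying \cref{eq:w-cond} (these are just $\est_e(a, \cdot)$ and $\est_e(b, \cdot)$, with the other coordinates zero). Any pair of the form $(\tilde w_a + c, \tilde w_b - c)$ with $c \in \R$ also satisfies \cref{eq:w-cond}, since the shift cancels in the sum $w_a(\Phi_{ax}) + w_b(\Phi_{bx})$; more generally, independent shifts are permissible on different connected components of a natural auxiliary bipartite graph introduced next.

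Next, I would consider the bipartite graph $H$ with left vertex set $\Sigma_a = \{\Phi_{ax} : x \in [d]\}$ (of size at most $m$), right vertex set $\Sigma_b = \{\Phi_{bx} : x \in [d]\}$ (of size at most $m$), and for each $x \in [d]$ an edge $\{\Phi_{ax}, \Phi_{bx}\}$ labelled by $\cL_{ax} - \cL_{bx} \in [-1, 1]$. Defining $u : \Sigma_a \cup \Sigma_b \to \R$ by $u(s) = w_a(s)$ for $s \in \Sigma_a$ and $u(t) = -w_b(t)$ for $t \in \Sigma_b$, condition \cref{eq:w-cond} becomes the potential identity $u(s) - u(t) = \cL_{ax} - \cL_{bx}$ along each edge. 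Thus $u$ is a potential on $H$, determined on each connected component up to one additive constant; these constants are exactly the shifts available from step one. I would then extend $w_a, w_b$ by zero outside $\Sigma_a, \Sigma_b$, which does not worsen the sup norm.

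Then, the remaining task is to bound $\|u\|_\infty$ by $m/2$ on each connected component by choosing the shift appropriately. Since each edge contributes at most $1$ in absolute value to differences of $u$, the span of $u$ on a component is at most its graph diameter, which is at most $2m - 1$ by the bipartite structure. After centring $u$ on each component the sup norm is then half of the span. The main obstacle is pinning down the factor $m/2$ rather than the cruder $m$: this requires a careful bipartite path-length argument that exploits alternation between sides and the fact that the labels arise as differences of losses in $[0,1]$, so that telescoping cancellations improve the naive pointwise bound of $1$ per edge. Once this is done, unwinding the change of variables $u \mapsto (w_a, w_b)$ recovers \cref{eq:w-cond} together with the desired bounds $\|w_a\|_\infty \leq m/2$ and $\|w_b\|_\infty \leq m/2$.
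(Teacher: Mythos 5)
Your setup mirrors the paper's: start from the pair $(\tilde w_a,\tilde w_b)$ supplied by local observability, form a bipartite graph on the signals of $a$ and $b$ with one edge per outcome, observe that the valid pairs form an affine family with one free additive shift per connected component, and choose the shift on each component to control the sup norm. The paper does exactly this, reducing the claim to a standalone combinatorial statement (\cref{lem:graph}). The difficulty is that the one step you do not carry out is the entire quantitative content of that statement. Your own argument --- each edge changes the potential $u$ by at most $1$, a component has diameter at most $2m-1$, centre $u$ on the component --- yields only $\norm{u}_\infty \leq m-1/2$, and you explicitly concede that reaching $m/2$ ``requires a careful bipartite path-length argument'' which you then do not supply. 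That argument is not a routine refinement; it is the heart of the proof, and without it you have proved the lemma only with $m/2$ replaced by roughly $m$.

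Moreover, the mechanism you gesture at does not obviously close the gap in your formulation. You encode the constraint as a potential difference $u(s)-u(t)=\cL_{ax}-\cL_{bx}$ with labels in $[-1,1]$; telescoping along a path then gives an alternating sum $\ell_1-\ell_2+\ell_3-\cdots$ in which no cancellation is forced, because the labels are symmetric about zero, so ``alternation between sides'' buys nothing beyond the naive diameter bound. The paper's \cref{lem:graph} instead keeps the sum form $f(u)+f(v)=\cL_{ax}-\cL_{bx}$, assumes the edge sums lie in an interval of length one, and --- crucially --- anchors the shift at the \emph{minimiser} of $f$ over one side of the component rather than at the midpoint of the range: the alternating sum of edge values along a path from $v$ to that minimiser is then nonnegative and bounded by the number of positively signed terms, i.e.\ by $m-1$ rather than by the path length $2(m-1)$, and the values on the other side are recovered through single edges. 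To repair your proposal you would need to import this one-sided extremal argument (or an equivalent); as written, the claimed bound $\norm{w_a}_\infty,\norm{w_b}_\infty\leq m/2$ is not established.
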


\begin{proof}[Proof of \cref{lem:bounds}]
By the definition of local observability and non-degeneracy there exists $w_a, w_b$ satisfying \cref{eq:w-cond}.
Consider the bipartite graph over $V = \{(a, 1),\ldots,(a, m), (b,1),\ldots,(b,m)\}$ and edges between vertices $(a, \sigma)$ and $(b, \sigma')$ if there exists an $x \in [d]$ such
that $\Phi_{ax} = \sigma$ and $\Phi_{bx} = \sigma'$.
Define a function $f : V \to \R$ by $f((a, \sigma)) = w_a(\sigma)$ and $f((b, \sigma)) = w_b(\sigma)$.
Since entries in the loss matrix are bounded in $[0,1]$ it holds that $f(w) + f(v) \in [0,1]$ for all edges $(w, v)$. 
The result follows from \cref{lem:graph}.
\end{proof}

For degenerate games the learner may need more than two actions to produce unbiased loss estimates, which unfortunately introduces the potential for an unpleasant combinatorial
structure that makes learning much harder. Nevertheless, the norm of the estimation vectors can be uniformly bounded in terms of $d$ and $k$.

\begin{proposition}\label{lem:bounds-global}
Suppose that $(\cL, \Phi)$ is globally observable and $a$ and $b$ are neighbours. Then there exists a function $\est : [k] \times \Sigma \to \R$ such that for all $x \in [d]$,
\begin{align*}
\sum_{c=1}^k \est(c, \Phi_{cx}) = \cL_{ax} - \cL_{bx}\,.
\end{align*}
Furthermore, $\est$ can be chosen so that $\norm{\est}_\infty \leq d^{1/2} k^{d/2}$.
\end{proposition}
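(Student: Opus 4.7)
The plan is to reduce the statement to a standard linear-algebra extremization. Cast the required condition as a linear system $Aw = b$ in the unknown $w \in \R^{k|\Sigma|}$, where $w$ is indexed by pairs $(c,\sigma) \in [k] \times \Sigma$, the matrix $A \in \{0,1\}^{d \times k|\Sigma|}$ is defined by $A_{x,(c,\sigma)} = \sind(\Phi_{cx}=\sigma)$, and the right-hand side is $b_x = \cL_{ax} - \cL_{bx} \in [-1,1]$. Global observability guarantees that this system is feasible, since it asserts exactly the existence of a $w$ witnessing \cref{eq:est} for the edge $(a,b)$. Thus the problem becomes: among all solutions $w$ of $Aw=b$, find one whose $\ell_\infty$ norm admits an upper bound depending only on $d$ and $k$.

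First I would replace an arbitrary feasible $w$ by a sparse one. A standard fact (the existence of a basic solution for feasible linear systems) yields a solution $w^\star$ whose support has size at most $r := \operatorname{rank}(A) \le d$. Restricting to these $r$ coordinates and to $r$ linearly independent rows of $A$ produces a nonsingular $r \times r$ integer submatrix $M$ with $0/1$ entries, satisfying $M w^\star_B = b'$ for a subvector $b' \in [-1,1]^r$ of $b$. Because $M$ has integer entries and is nonsingular, $|\det M| \ge 1$.

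Next I would invoke Cramer's rule: each surviving coordinate of $w^\star$ equals $\det(M^{(i)})/\det(M)$, where $M^{(i)}$ is $M$ with column $i$ replaced by $b'$. The key structural observation is that each row of $A$ contains exactly $k$ ones, one for each action $c$ (since for every $c$ there is a unique $\sigma$ with $A_{x,(c,\sigma)}=1$, namely $\sigma = \Phi_{cx}$). Consequently each row of $M$ has at most $\min(k,r) \le k$ ones, and each row of $M^{(i)}$ has Euclidean norm at most $\sqrt{k}$ with an extra factor of size $\le 1$ from the replaced coordinate of $b'$. Applying Hadamard's inequality row-wise to $M^{(i)}$ and combining with $\|b'\|_2 \le \sqrt{d}$ (via cofactor expansion along the replaced column and Cauchy--Schwarz on the cofactors, each bounded by $k^{(r-1)/2}$) produces the advertised bound $|w^\star_{(c,\sigma)}| \le d^{1/2} k^{d/2}$. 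The entries of $w^\star$ outside the chosen basis are zero, so the $\ell_\infty$ bound extends to the whole vector.

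The main obstacle is not the feasibility nor the Cramer-style bound in isolation but the balancing of the two Hadamard estimates to achieve exactly the dependence $d^{1/2} k^{d/2}$ rather than the cruder $d^{d/2}$ (from column Hadamard) or $d \cdot k^{(d-1)/2}$ (from a naive row Hadamard on $M^{(i)}$). The right strategy is to apply the row bound $\sqrt{k}$ to the cofactors of the unchanged columns, so that each cofactor is controlled by $k^{(r-1)/2}$, and then use Cauchy--Schwarz against $b'$ to pick up only a $\sqrt{d}$ factor from the replaced column. A secondary subtlety is to argue rigorously that the sparse basic solution exists even when $A$ is rank deficient, which is handled by the standard column-basis reduction before applying Cramer's rule.
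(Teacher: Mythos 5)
Your overall strategy is sound and genuinely different from the paper's. The paper sets up the same linear system ($Sw=y$ in its notation) but then takes the particular solution $w=S^+y$ given by the Moore--Penrose pseudo-inverse, bounds $\norm{w}_\infty\le\norm{S^+}_2\norm{y}_2\le d^{1/2}\norm{S^+}_2$, and controls $\norm{S^+}_2\le k^{d/2}$ by lower-bounding the least nonzero eigenvalue of the integer Gram matrix $SS^\top\in\{0,\dots,k\}^{d\times d}$ by $k^{-d}$ (\cref{lem:spec}, which uses that the product of the nonzero eigenvalues of an integer PSD matrix is a positive integer, together with AM--GM and the trace bound). Your basic-solution-plus-Cramer route is a legitimate alternative: feasibility from global observability, the reduction to a nonsingular $0/1$ matrix $M$ of size $r=\operatorname{rank}(A)\le d$ with $|\det M|\ge 1$, and the sparsity of the resulting witness are all correct, and the sparse support is a small bonus the paper's proof does not provide.

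The gap is in the last quantitative step, exactly where you locate the difficulty. Bounding each cofactor by row-Hadamard gives $|\det(M_{ji})|\le k^{(r-1)/2}$, but Cauchy--Schwarz against $b'$ then costs $\norm{b'}_2\,\norm{c}_2$ with $c$ the cofactor vector, and passing from the entrywise bound to $\norm{c}_2$ picks up another $\sqrt{r}$; the net result is $|\det M^{(i)}|\le r\,k^{(r-1)/2}\le d\,k^{(d-1)/2}$, which exceeds $d^{1/2}k^{d/2}$ by a factor $\sqrt{d/k}$ whenever $d>k$ --- the typical regime (full-information games have $d=2^k$). Direct row-Hadamard on $M^{(i)}$ is no better, since the replaced column can overwrite a zero entry and each row norm only comes out as $\sqrt{k+1}$, giving $(k+1)^{r/2}$. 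So as written you prove $\norm{\est}_\infty\le d\,k^{(d-1)/2}$, which is of the same exponential order and suffices for every use of the proposition in the paper, but is not the stated constant. A repair within your framework: note that $\det$ of $M$ with column $i$ replaced by any $v$ equals $\ip{c,v}$; apply this with $v=c/\norm{c}_2$, use row-Hadamard plus AM--GM over the rows, and use that column $i$ of the nonsingular $M$ contains at least one $1$, to get $\norm{c}_2^2\le\prod_j(k-M_{ji}^2+v_j^2)\le k^r$. Then $|\det M^{(i)}|=|\ip{c,b'}|\le\norm{c}_2\norm{b'}_2\le k^{r/2}\sqrt{d}\le d^{1/2}k^{d/2}$ as required.
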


\begin{proof}
For action $a$, let $S_a \in \{0,1\}^{|\Sigma| \times d}$ be the matrix with $(S_a)_{\sigma x} = \one{\Phi_{ax} = \sigma}$, which 
means that $S_a e_x = e_{\Phi_{ax}}$. Here we have abused notation by indexing the rows of $S_a$ using signals.
Let $S = (S_1^\top,\ldots,S_k^\top)$, which means that $S \in \R^{d \times mk}$.
Then let $y = (e_a - e_b)^\top \cL \in [-1,1]^k$. We identify $w$ with a vector in $\R^{km}$. 
By the assumption of global observability there exists a $w \in \R^{km}$ such that $S w = y$.
Hence we may take $w = S^+ y$ with $S^+$ the Moore-Penrose pseudo-inverse and for which
\begin{align*}
\norm{w}_\infty \leq \norm{w}_2 \leq \Vert S^+ \Vert_2 \norm{y}_2 \leq d^{1/2} \Vert S^+ \Vert_2 \leq d^{1/2} k^{d/2}\,,
\end{align*}
where $\norm{S^+}_2$ is the spectral norm of $S^+$ and the final inequality follows from \cref{lem:spec}.
\end{proof}

\section{Non-equivalence to Hedge}\label{app:hedge}

\cref{alg:simple} does not reduce to Hedge in the full information setting.
The full information game with binary losses and 
$k$ actions has $d = 2^k$ outcomes, which we associate with $\{0,1\}^k$ via some arbitrary bijection and then view the outcomes as being in $\{0,1\}^k$ instead of $[d]$.
The signal matrix is $\Phi_{ax} = x \in \{0,1\}^k$ and the loss matrix is $\cL_{ax} = x_a$.
Given distribution $q \in \cP$, the estimation function $G \in \cH$ that minimises the objective in \cref{eq:opt} for the full information game can be calculated analytically as
\begin{align*}
G(a, \sigma) = p_a (\sigma + c(\sigma))\,,
\end{align*}
where the shifting constant $c(\sigma)$ is given by
\begin{align*}
c(\sigma) = \frac{1}{\eta} \log\left(\ip{q, \exp(-\eta \sigma)}\right) = -\ip{q, \sigma} + O(\eta)\,.
\end{align*}
Note that $G \in \cHu$ is unbiased.
The sampling distribution $p$ should be the minimiser of
\begin{align*}
&\frac{1}{\eta} \min_{p \in \cP} \max_{x \in \{0,1\}^k} \left(\ip{p - q, x} + \frac{1}{\eta} \left(\eta \ip{q, x} + \log\left(\ip{q, \exp(-\eta x)}\right)\right)\right) \\
&\qquad\qquad\qquad\approx \frac{1}{\eta} \min_{p \in \cP} \max_{x \in \{0,1\}^k} \left(\ip{p - q, x} + \frac{\eta}{2} \ip{q, x^2}\right)\,.
\end{align*}
The inner optimisation problem is not especially pleasant, but as $\eta$ tends to zero the linear term dominates and the optimal $p$ tends to $q$. Generally speaking, however, the optimal $p$ is
not equal to $q$. A numerical calculation shows that when $k = 2$ and $\eta = 0.5$ and $q = (0.9, 0.1)$, then the optimal $p$ is approximately $p = (0.897, 0.103) \neq q$.

\section{Technical lemmas}

\begin{lemma}[\citealt{PL19}]\label{lem:tech1}
Let $(a_t)_{t=1}^n$ be a sequence of non-negative reals. Then
\begin{align*}
\sum_{t=1}^n \frac{a_t}{\sqrt{1 + \sum_{s=1}^{t-1} a_s}} \leq 4 \sqrt{1 + \frac{1}{2} \sum_{t=1}^n a_t} + \max_{t \in [n]} a_t \,.
\end{align*}
\end{lemma}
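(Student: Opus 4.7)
The approach is to prove the lemma by induction on $n$ using the strengthened hypothesis
\[
\sum_{t=1}^n \frac{a_t}{\sqrt{1 + S_{t-1}}} \leq 4\sqrt{1 + \tfrac{1}{2}(S_n - M_n)} + M_n\,,
\]
where $S_t = \sum_{s \leq t} a_s$ and $M_n = \max_{t \leq n} a_t$. The stated bound follows immediately because $S_n - M_n \leq S_n$. The base cases $n \in \{0,1\}$ are direct: for $n = 1$, the left side equals $a_1$ while the right side is $4 + a_1$.

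For the inductive step, let $\Phi(S, M) := 4\sqrt{1 + (S-M)/2} + M$. It suffices to show $a_{n+1}/\sqrt{1+S_n} \leq \Phi(S_{n+1}, M_{n+1}) - \Phi(S_n, M_n)$. I would split on whether $a_{n+1} \leq M_n$ (Case (i), so $M_{n+1} = M_n$) or $a_{n+1} > M_n$ (Case (ii), so $M_{n+1} = a_{n+1}$). In Case (ii), using $a_{n+1} \geq M_n$ and $1/\sqrt{1+S_n} \leq 1$ gives
\[
\frac{a_{n+1}}{\sqrt{1+S_n}} - (a_{n+1} - M_n) \leq \frac{M_n}{\sqrt{1+S_n}}\,,
\]
which reduces the required inequality to the Case (i) inequality evaluated at $a_{n+1} = M_n$. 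Writing $s = S_n - M_n \geq 0$, and rationalising the difference of square roots, the Case (i) inequality reduces (after using $a_{n+1} \leq M_n$ to enlarge the left side) to
\[
\sqrt{1 + (s + M_n)/2} + \sqrt{1 + s/2} \leq 2\sqrt{1 + s + M_n}\,,
\]
which is immediate because each summand on the left is at most $\sqrt{1 + s + M_n}$.

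The main obstacle is identifying the correct strengthening. A direct induction on the bound as stated fails in Case (ii) when $a_{n+1}$ is much larger than the running maximum $M_n$: the increment $a_{n+1}/\sqrt{1+S_n}$ can exceed the modest gain in $4\sqrt{1+S_{n+1}/2}$ that one receives from a single large step. Subtracting $M_n$ inside the square root in the induction hypothesis creates exactly the extra slack that appears when a new record is set, and with this strengthening both cases close via the elementary term-by-term comparison above.
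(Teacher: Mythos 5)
Your proof is correct. The paper does not prove this lemma at all --- it simply imports it from \citet{PL19} --- so there is no internal argument to compare against; what you have produced is a self-contained, elementary replacement for that citation, and it in fact establishes the slightly stronger bound $4\sqrt{1+\tfrac12(S_n-M_n)}+M_n$. I checked the details: the base cases are fine; in Case (i), rationalising gives an increment of $2a_{n+1}/\bigl(\sqrt{1+(s+a_{n+1})/2}+\sqrt{1+s/2}\bigr)$ with $s=S_n-M_n$, and since $a_{n+1}\le M_n$ both radicands are at most $1+s+M_n=1+S_n$, which yields exactly the needed comparison with $a_{n+1}/\sqrt{1+S_n}$; in Case (ii), the manipulation $a_{n+1}(1/\sqrt{1+S_n}-1)\le M_n(1/\sqrt{1+S_n}-1)$ (valid because the bracket is nonpositive and $a_{n+1}\ge M_n$) correctly reduces the record-setting case to the Case (i) inequality at $a_{n+1}=M_n$, which that case covers. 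Your diagnosis of why the unstrengthened induction fails --- a single term with $a_{n+1}\gg M_n$ contributes up to $a_{n+1}/\sqrt{1+S_n}$ while the square-root term gains only $O(\sqrt{a_{n+1}})$ --- is exactly the reason the $\max_t a_t$ term appears additively in the statement, and subtracting $M_n$ inside the root is a clean way to bank that slack. The only cosmetic point is to fix the convention $M_0=0$ so the $n=0$ base case is well posed (or simply start the induction at $n=1$).
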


\begin{lemma}\label{lem:tech2}
Let $\alpha > 0$ and $(a_t)_{t=1}^n$ be a sequence of non-negative reals with $a_{t+1} \leq a_t + \alpha a_t^{1/4}$.
Then 
\begin{align*}
a_n \leq \left(\frac{3 \alpha (n-1)}{4} + a_1^{3/4}\right)^{4/3}\,.
\end{align*}
\end{lemma}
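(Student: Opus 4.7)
The plan is to change variables so that the recursion becomes additive rather than multiplicative. Specifically, define $b_t = a_t^{3/4}$ and show that $b_{t+1} - b_t \leq \frac{3\alpha}{4}$; telescoping then yields the claim.

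First I would handle the degenerate case: if $a_t = 0$ then the hypothesis forces $a_{t+1} \leq 0$, and since the sequence is non-negative we get $a_s = 0$ for all $s \geq t$, so the bound holds trivially. Assume then $a_t > 0$.

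The key step uses the concavity (and monotonicity) of $f(x) = x^{3/4}$ on $(0, \infty)$, whose derivative is $f'(x) = \tfrac{3}{4} x^{-1/4}$. Concavity gives the tangent-line upper bound $f(y) \leq f(x) + f'(x)(y - x)$ for all $y \geq 0$, hence
\begin{align*}
a_{t+1}^{3/4} - a_t^{3/4} \leq \tfrac{3}{4} a_t^{-1/4} (a_{t+1} - a_t) \leq \tfrac{3}{4} a_t^{-1/4} \cdot \alpha a_t^{1/4} = \tfrac{3\alpha}{4}\,,
\end{align*}
where the second inequality uses the hypothesis $a_{t+1} - a_t \leq \alpha a_t^{1/4}$ together with $a_t^{-1/4} > 0$. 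Note this is valid even if $a_{t+1} < a_t$, since then the left-hand side is non-positive.

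Telescoping the inequality $a_{t+1}^{3/4} \leq a_t^{3/4} + \tfrac{3\alpha}{4}$ from $t = 1$ to $n-1$ gives $a_n^{3/4} \leq a_1^{3/4} + \tfrac{3\alpha(n-1)}{4}$, and raising both sides to the power $4/3$ yields the stated bound. The only real subtlety is the concavity step, which is standard; no further obstacle is anticipated.
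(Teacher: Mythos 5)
Your proof is correct, and it takes a genuinely different (and arguably more rigorous) route than the paper. The paper compares the sequence to the solution of the differential equation $y'(t) = \alpha y(t)^{1/4}$, $y(0) = a_1$, and asserts $a_n \leq y(n-1)$ ``by comparison'' without spelling out the discrete-versus-continuous comparison step. You instead work directly with the recursion: the concavity bound $a_{t+1}^{3/4} - a_t^{3/4} \leq \tfrac{3}{4} a_t^{-1/4}(a_{t+1}-a_t) \leq \tfrac{3\alpha}{4}$ turns the multiplicative-looking hypothesis into an additive one for $b_t = a_t^{3/4}$, and telescoping finishes the job. Your handling of the edge cases is also sound: the $a_t = 0$ case is dealt with separately (and indeed the telescoped inequality holds there trivially as well), and you correctly note the tangent-line bound remains valid when $a_{t+1} < a_t$. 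In effect, your argument supplies exactly the discrete comparison lemma that the paper's ODE phrasing leaves implicit; the paper's version is shorter and makes the choice of exponent $3/4$ transparent (it is just the antiderivative computation for $y^{-1/4}\,dy = \alpha\,dt$), while yours is self-contained and elementary. Both yield the identical bound.
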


\begin{proof}
Consider the differential equation $y(0) = a_1$ and
$y'(t) = \alpha y(t)^{1/4}$, which has solution
\begin{align*}
y(t) =  \left(\frac{3\alpha t}{4} + a_1^{3/4}\right)^{4/3}\,.
\end{align*}
By comparison, $a_n \leq y(n-1)$ and the result follows.
\end{proof}

The next lemma provides a lower bound on the smallest non-zero eigenvalue of a positive semi-definite matrix with integer entries.
Such results are somehow the reverse of the more well-known Hadamard problem of finding the maximum determinant \citep{AV97}. 
Presumably the naive bound below is known to experts, but a source seems hard to find.

\begin{lemma}\label{lem:spec}
Let $k \geq 3$ and $A \in \{0,\ldots,k\}^{d \times d}$ be non-zero and positive semi-definite with eigenvalues $\lambda_1,\ldots,\lambda_d$. Then
$\min\{\lambda_i : \lambda_i > 0\} \geq k^{-d}$.
\end{lemma}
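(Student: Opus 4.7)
The plan is to get the bound by combining a product lower bound on the nonzero eigenvalues (from integrality) with a sum upper bound (from the trace) and then extracting the smallest eigenvalue via the AM--GM inequality. Let $r = \operatorname{rank}(A) \geq 1$ and write the nonzero eigenvalues in decreasing order as $\mu_1 \geq \mu_2 \geq \cdots \geq \mu_r > 0$.

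The first step is to show that $\prod_{i=1}^r \mu_i \geq 1$. By Vieta's formulas the product of the nonzero eigenvalues equals the coefficient of $\lambda^{d-r}$ in $\det(\lambda I - A)$ (up to sign), which in turn equals the sum of all $r\times r$ principal minors of $A$. Each such minor is a non-negative integer: integer because $A$ has integer entries, and non-negative because every principal submatrix of a PSD matrix is itself PSD. Since $A$ has rank $r$, writing $A = B^\top B$ with $B$ of size $r \times d$ and picking $r$ linearly independent columns of $B$ produces a principal submatrix $A_{II}$ with $\det(A_{II}) = \det(B_I)^2 > 0$, so at least one minor in the sum is a positive integer, giving $\prod_i \mu_i \geq 1$.

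Next, since every diagonal entry is in $\{0,\ldots,k\}$, one has $\sum_{i=1}^r \mu_i = \operatorname{tr}(A) \leq dk$. For $r \geq 2$, applying AM--GM to the $r-1$ largest nonzero eigenvalues yields
\begin{equation*}
\prod_{i=1}^{r-1} \mu_i \;\leq\; \left(\tfrac{1}{r-1}\sum_{i=1}^{r-1}\mu_i\right)^{r-1} \;\leq\; \left(\tfrac{dk}{r-1}\right)^{r-1}\,,
\end{equation*}
and dividing the product lower bound of step one by this upper bound gives $\mu_r \geq ((r-1)/(dk))^{r-1}$. The case $r=1$ is handled separately: $\mu_1 = \operatorname{tr}(A)$ is a positive integer and hence $\mu_1 \geq 1 \geq k^{-d}$.

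The final step, which I expect to be the main obstacle (and pins down why $k \geq 3$ appears), is the scalar inequality
\begin{equation*}
\left(\tfrac{r-1}{dk}\right)^{r-1} \;\geq\; k^{-d}\qquad \text{for } k \geq 3,\ 2 \leq r \leq d\,.
\end{equation*}
After taking logarithms this rearranges to $(d-r+1)\log k \geq (r-1)\log\!\big(d/(r-1)\big)$. Setting $t = (r-1)/d \in (0,1)$, it suffices to verify $h(t) := (1-t)\log 3 + t\log t \geq 0$ on $[0,1)$. Since $h(1)=0$ and $h'(t) = \log(te/3) < 0$ for $t \in (0,1)$ (because $3/e > 1$), $h$ is strictly decreasing on $[0,1]$, so $h(t) > 0$ on $[0,1)$, and the bound follows. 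The condition $k\ge 3$ is essentially tight here because $(d/(d-1))^{d-1} \to e$ as $d \to \infty$, so $k = 2$ would fail this step for large $d$.
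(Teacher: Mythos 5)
Your proof is correct and follows essentially the same route as the paper's: integrality forces the product of the nonzero eigenvalues to be at least $1$, the trace bounds their sum by $dk$, and AM--GM applied to the larger eigenvalues isolates the smallest one. You additionally verify the final scalar inequality $\left(\tfrac{r-1}{dk}\right)^{r-1}\ge k^{-d}$ and justify the positivity of the eigenvalue product via principal minors --- both steps the paper simply asserts --- and your observation that $k\ge 3$ is exactly what makes the last step work (since $(d/(d-1))^{d-1}\to e$) is accurate.
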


\begin{proof}
Assume without loss of generality that $(\lambda_j)_{j=1}^d$ is decreasing and $i$ is the index of the smallest non-zero eigenvalue.
If $i = 1$, then $\lambda_i \geq 1$ and the result is immediate. Suppose now that $i > 1$. 
Since $A$ has integer coefficients, the product of its non-zero eigenvalues is a positive integer, which means that
$\prod_{i: \lambda_i > 0} \lambda_i \geq 1$. 
Hence, by the arithmetic-geometric mean inequality,
\begin{align*}
\frac{1}{\lambda_i} 
&\leq \prod_{j=1}^{i-1} \lambda_j 
\leq \left(\frac{1}{i-1} \sum_{j=1}^{i-1} \lambda_j\right)^{i-1} 
\leq \left(\frac{\tr(A)}{i-1}\right)^{i-1} 
\leq \left(\frac{dk}{i-1}\right)^{i-1}
\leq k^d\,.
\qedhere
\end{align*}
\end{proof}

\begin{lemma}\label{lem:graph}
Let $V_1$ and $V_2$ be disjoint sets with $|V_1| = |V_2| = m$ and $V = V_1 \cup V_2$.
Suppose that $(V, E)$ is a bipartite graph with $E \subseteq V_1 \times V_2$ and $f : V \to \R$ is a function such that $f(u) + f(v) \in [0,1]$ for all $(u, v) \in E$. 
Then there exists a function $g : V \to \R$ such that
\begin{enumerate}
\item[(a)] $\norm{g}_\infty \leq \frac{m}{2}$.
\item[(b)] $g(u) + g(v) = f(u) + f(v)$ for all $u, v \in E$.
\end{enumerate}
\end{lemma}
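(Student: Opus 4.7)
The plan is to treat each connected component of $(V, E)$ independently. For any connected component $C$ containing at least one edge, the set of $g$ satisfying the edge-sum conditions on $C$ is a one-parameter family: if one sets $g(u) = f(u) + c$ for $u \in V_1 \cap C$ and $g(v) = f(v) - c$ for $v \in V_2 \cap C$, all edge constraints within $C$ are preserved, and connectivity shows these are the only solutions. For isolated vertices we simply let $g = 0$, which satisfies (b) vacuously. Thus the problem reduces to finding, in each nontrivial component $C$, a shift $c$ which places every value in $[-m/2, m/2]$.

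Writing $A, a$ for the max and min of $f$ on $V_1 \cap C$ and $B, b$ likewise on $V_2 \cap C$, the existence of such a $c$ is equivalent (by translating the four absolute-value inequalities $|f(u) + c|, |f(v) - c| \leq m/2$ into interval constraints on $c$) to the four scalar bounds
\begin{equation*}
A - a \leq m,\qquad B - b \leq m,\qquad A + B \leq m,\qquad a + b \geq -m\,.
\end{equation*}
These will follow from the edge-sum hypothesis by chasing telescoping identities along paths in $C$.

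The key tool to establish is: along any path $u_0, v_1, u_1, \ldots, u_\ell$ in $C$ of even length $2\ell$ between two vertices on the same side of the bipartition, one has $|f(u_0) - f(u_\ell)| \leq \ell$. This is a direct telescoping,
\begin{equation*}
f(u_0) - f(u_\ell) = \sum_{i=1}^{\ell} \bigl([f(u_{i-1}) + f(v_i)] - [f(u_i) + f(v_i)]\bigr),
\end{equation*}
with each bracketed term lying in $[0,1]$. Since $|V_1 \cap C|, |V_2 \cap C| \leq m$, such an $\ell$ is at most $m - 1$, giving $A - a \leq m - 1$ and $B - b \leq m - 1$. The bound $A + B \leq m$ follows by picking an edge $(u_A, v')$ incident to a maximiser on $V_1 \cap C$ and combining $f(v') \leq 1 - A$ with $B \leq f(v') + (m - 1)$. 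A parallel odd-length path argument between the two minimisers of $f$ (one on each side) yields $a + b \geq 1 - m > -m$.

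The main obstacle is purely combinatorial bookkeeping around path parity and diameter: confirming that each of the four inequalities matches an appropriate path (even-length for same-side differences, odd-length for $a + b$) in the bipartite component, so that the diameter bound $m - 1$ kicks in at the right place. Once these are in hand, the feasibility interval for $c$ is non-empty, and any $c$ in it yields a $g$ with $\norm{g}_\infty \leq m/2$ as required.
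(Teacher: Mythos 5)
Your proposal is correct and follows essentially the same route as the paper: decompose into connected components, observe that the solutions on a component form a one-parameter family of shifts, and bound everything via telescoping alternating sums along paths whose length is controlled by $|V_1\cap C|,|V_2\cap C|\le m$. The only cosmetic difference is that you characterise the full feasibility interval for the shift $c$ via four inequalities, whereas the paper simply anchors the shift at the minimiser of $f$ on $V_1\cap C$ and verifies the resulting $g$ directly.
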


\begin{proof}
We define $g$ on each connected component of $(V, E)$.
For edge $(u, v) \in E$ we abuse notation by writing $f(e) = f(u) + f(v)$.
Let $U \subseteq V$ be a connected component and $u = \argmin_{v \in U \cap V_1} f(v)$.
\begin{align*}
g(v) = 
\begin{cases}
f(v) - f(u) - m/2 + 1, & \text{if } v \in V_1; \\
f(v) + f(u) + m/2 - 1, & \text{if } v \in V_2\,.
\end{cases}
\end{align*}
Then for any $v \in U \cap V_1$ there exists a path $(e_t)_{t=1}^n$ from $v$ to $u$ with $n \leq 2(m-1)$ and
\begin{align*}
g(v) + m/2 - 1 = g(v) - g(u) = \sum_{t=1}^n (-1)^{t+1} f(e_t) \leq m - 1\,.
\end{align*}
Hence $g(v) \in [-m/2+1, m/2]$ for all $v \in U \cap V_1$ and so $g(v) \in [-m/2, m/2]$ for all $v \in U \cap V_2$.
\end{proof}

The following lemma has been seen before in many forms \citep[for example]{ACFS95} and follows immediately from the Chernoff method.

\begin{lemma}\label{lem:hp}
Suppose that $(X_t)_{t=1}^n$ is a sequence of random variables adapted to filtration $(\cF_t)_{t=1}^n$ and $(\lambda_t)_{t=1}^n$ is $(\cF_t)$-predictable and for $\eta > 0$,
\begin{align*}
\E[\exp(\eta X_t - \lambda_t^2) \mid \cF_{t-1}] \leq 1 \,\,\, a.s.\,.
\end{align*}
Then for any $\delta \in (0,1)$,
\begin{align*}
\Prob{\sum_{t=1}^n X_t \geq \sum_{t=1}^n\frac{\lambda_t^2}{\eta} + \frac{\log(1/\delta)}{\eta}} \leq \delta\,.
\end{align*}
\end{lemma}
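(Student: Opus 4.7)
The plan is to use the standard Chernoff/supermartingale argument. Define the process
\[
M_t = \exp\!\left(\eta \sum_{s=1}^t X_s - \sum_{s=1}^t \lambda_s^2\right)\,,
\]
with $M_0 = 1$. First I would show that $(M_t)_{t=0}^n$ is a non-negative supermartingale with respect to $(\cF_t)$. Since $\lambda_t$ is $(\cF_t)$-predictable, $\sum_{s=1}^{t-1} \lambda_s^2$ is $\cF_{t-1}$-measurable, so
\[
\E[M_t \mid \cF_{t-1}] = M_{t-1} \cdot \E\!\left[\exp\!\left(\eta X_t - \lambda_t^2\right) \mid \cF_{t-1}\right] \leq M_{t-1}\,,
\]
using the hypothesis of the lemma in the last step. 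Iterating this inequality (or equivalently taking total expectations) yields $\E[M_n] \leq M_0 = 1$.

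Next I would apply Markov's inequality to $M_n$ at level $1/\delta$:
\[
\Prob{M_n \geq 1/\delta} \leq \delta \cdot \E[M_n] \leq \delta\,.
\]
Unwrapping the definition of $M_n$, the event $\{M_n \geq 1/\delta\}$ is precisely the event
\[
\left\{\eta \sum_{t=1}^n X_t - \sum_{t=1}^n \lambda_t^2 \geq \log(1/\delta)\right\}\,,
\]
which after dividing through by $\eta > 0$ is the event in the lemma's conclusion. There is no real obstacle here; the only thing to be mildly careful about is that $\lambda_t^2$ is $\cF_{t-1}$-measurable (which it is, by the predictability assumption) so that it pulls out of the conditional expectation cleanly, and that $\eta$ is a fixed positive constant so dividing by $\eta$ preserves the inequality inside the probability.
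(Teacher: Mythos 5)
Your proof is correct and is essentially the paper's own argument: the paper likewise forms the exponential process, uses the tower rule (your supermartingale step spelled out) to get $\E[M_n]\leq 1$, and applies Markov's inequality before rearranging. No differences worth noting.
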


\begin{proof}
By Markov's inequality and the tower rule for conditional expectation,
\begin{align*}
\Prob{\exp\left(\sum_{t=1}^n \eta X_t - \lambda_t^2\right) \geq \frac{1}{\delta}} \leq \delta\,.
\end{align*}
Re-arranging completes the proof. 
\end{proof}

\section{A second-order cone approximation}\label{sec:approx}

The optimisation problem in \cref{eq:opt} is convex and can be written as an exponential cone program.
For small problems and reasonably large $\eta$ it is amenable to standard methods. Numerical instability seems to be a problem when $\eta$ is small, however.
A practical resolution is to move some of the analysis into the optimisation problem by adding constraints on the magnitude of the estimation function and
then approximating $\Psi_q$ by an upper bound as in \cref{eq:psi}.
This leads to the following formulation of the approximation of \cref{eq:opt} as a second-order cone program:
\begin{mdframed}[roundcorner=1pt,backgroundcolor=black!5!white]
\begin{equation}
\label{eq:socp}
\begin{alignedat}{3}
&\underset{G \in \cH, p \in \cP}{\text{minimise}} \qquad && \max_{x \in [d]} \Bigg[
  \frac{(p - q)^\top \cL e_x + \bias_q(G ; x)}{\eta} + \sum_{a=1}^k \frac{\ip{q, G(a, \Phi_{ax})^2}}{p_a} 
  \Bigg] \\
&\text{subject to} && G(a, \sigma) + \frac{p_a}{\eta} \ones \geq \zeros \text{ for all } a \text{ and } \sigma \\
&\text{and}        && p_a \geq \epsilon \text{ for all } a\,.
\end{alignedat}
\end{equation}
\end{mdframed}
The first constraint justifies using the bound in \cref{eq:psi} to approximate $\Psi_q$.
The parameter $\epsilon \geq 0$ in the second constraint is present to improve numerical stability and should be chosen so that
its impact on the regret is negligible. For example, $\epsilon = \eta^2$.  

Let $\opt_q^\sim(\eta)$ be the
optimal value of the above optimisation problem and
\begin{align*}
\opt_*^\sim(\eta) = \sup_{q \in \cP} \opt_q^\sim(\eta)\,.
\end{align*}
It is straightforward to show that the value of \cref{eq:opt} at the optimiser of \cref{eq:socp} is at most $\opt_q^\sim(\eta)$.
Indeed, the upper bounds on $\opt_q(\eta)$ were all proven in this manner. We are not aware of a situation where $\opt_q(\eta) \ll \opt_q^\sim(\eta)$.

\begin{figure}[h!]
\pgfplotstableread{data/pennies-c.txt}{\DataPenniesC}
\centering
\begin{tikzpicture}
\begin{axis}[width=10cm,xlabel={$c$},legend pos=north west,legend cell align={left}]
\addplot+[ultra thick,mark=none] table[x index=0,y index=1] {\DataPenniesC};
\addlegendentry{$\opt_*(\eta)$};
\addplot+[ultra thick,mark=none] table[x index=0,y index=3] {\DataPenniesC};
\addlegendentry{$\opt_*^\sim(\eta)$};
\addplot+[ultra thick,mark=none,black] table[x index=0,y index=4] {\DataPenniesC};
\addlegendentry{\cref{eq:opt} at optimiser of \cref{eq:socp}};
\end{axis}
\end{tikzpicture}
\caption{
The plot illustrates the quality of the approximation in \cref{eq:socp} for the matching pennies game with the cost varying on the $x$-axis and a fixed learning rate: $\eta = 0.01$.
The blue line shows $\opt_*(\eta)$.
The red line shows $\opt_*^\sim(\eta)$ and the black line is the value of \cref{eq:opt} evaluated at the optimiser of \cref{eq:socp}.
At least for this game the approximation is quite reasonable. The abrupt increase when $c > 1/2$ occurs because this is where the game
transitions from being locally observable to only globally observable.
Both \cref{eq:opt} and \cref{eq:socp} were solved using the Splitting Cone Solver \citep{DCP16,DCP17}.
}
\end{figure}
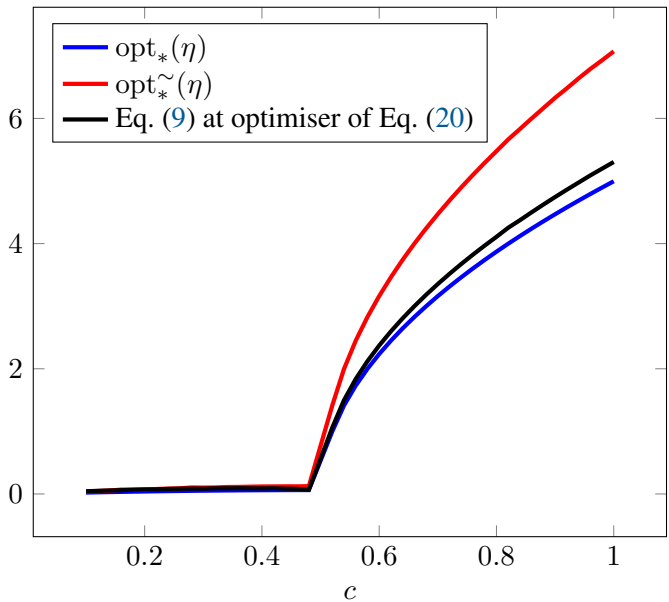

\section{Experiments}

In our simple experiments we use the Splitting Cone Solver \citep{DCP16,DCP17} to solve the optimisation problem in \cref{eq:socp}.
The performance of the algorithm is illustrated on the costly matching pennies game (\cref{eq:example}), which
is locally observable and non-degenerate for $c \in (0, 1/2)$ and globally observable for $c > 1/2$.
When $c = 1/2$ it is degenerate and locally observable. When $c = 0$ it is trivial.
The next figure shows the regret of ExpPM in costly matching pennies for two different values of $c$.

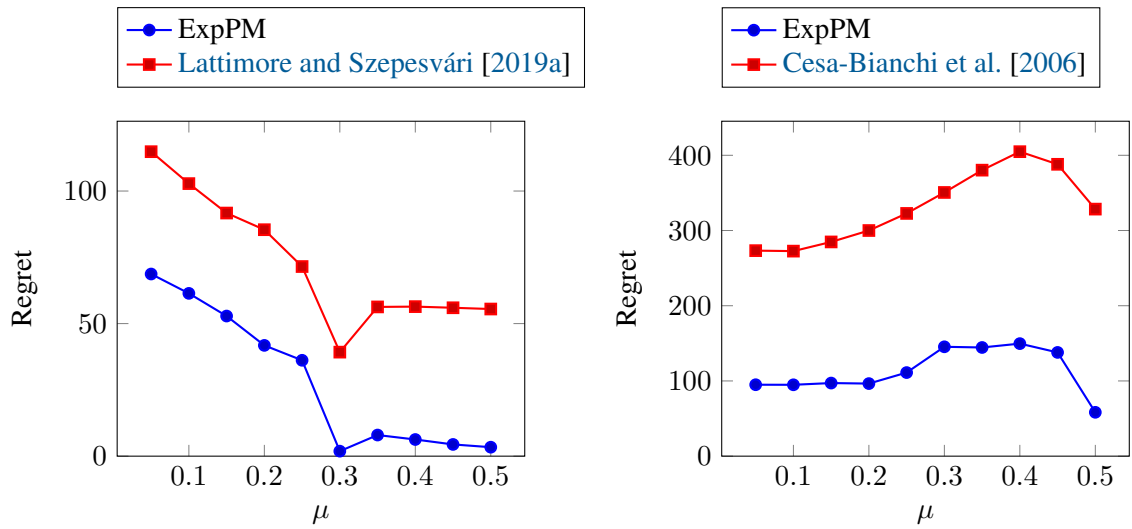
\begin{figure}[h!]
\pgfplotstableread{data/exp_pm-0.3.txt}{\DataExpPMEasy}
\pgfplotstableread{data/nw-0.3.txt}{\DataNWEasy}
\centering
\begin{tikzpicture}
\begin{axis}[width=7cm,xlabel={$\mu$},ylabel={Regret},legend cell align={left},legend style={at={(0,1.1)},anchor=south west},ymin=0]
\addplot+[thick] table[x index=1,y index=3] {\DataExpPMEasy};
\addlegendentry{ExpPM};
\addplot+[thick] table[x index=1,y index=3] {\DataNWEasy};
\addlegendentry{\cite{LS18pm}};
\end{axis}
\end{tikzpicture}
\pgfplotstableread{data/exp_pm-1.txt}{\DataExpPMHard}
\pgfplotstableread{data/gexp-1.txt}{\DataGexpHard}
\centering
\begin{tikzpicture}
\begin{axis}[width=7cm,xlabel={$\mu$},ylabel={Regret},legend cell align={left},legend style={at={(0,1.1)},anchor=south west},ymin=0]
\addplot+[thick] table[x index=1,y index=3] {\DataExpPMHard};
\addlegendentry{ExpPM};
\addplot+[thick] table[x index=1,y index=3] {\DataGexpHard};
\addlegendentry{\cite{CBLuSt06}};
\end{axis}
\end{tikzpicture}
\caption{Costly matching pennies where the adversary is stochastic and samples from the outcomes i.i.d.\ from distribution $(\mu, 1-\mu)$. The horizon is $n = 2000$.
On the left plot the cost is $c = 3/10$ and the algorithm is compared to Neighbourhood Watch 2 \citep{LS18pm}. On the right plot the cost
is $c = 1$ and the algorithm is compared to the algorithm by \cite{CBLuSt06}.}
\end{figure}

\end{document}